\author{Hiroshi Kera, Yoshihiko Hasegawa\\
Department of Information and Communication Engineering, \\
Graduate School of Information Science and Technology, \\
The University of Tokyo, Tokyo, Japan\\
}
\title{Gradient Boosts the Approximate Vanishing Ideal}
\newtheorem{thm}{Theorem}
\newtheorem{defn}{Definition}
\newtheorem{prop}{Proposition}
\newtheorem{lem}{Lemma}
\newtheorem{rem}{Remark}
\newtheorem{conj}{Conjecture}
\begin{document}
\maketitle
\begin{abstract}
In the last decade, the approximate vanishing ideal and its basis construction algorithms have been extensively studied in computer algebra and machine learning as a general model to reconstruct the algebraic variety on which noisy data approximately lie.
In particular, the basis construction algorithms developed in machine learning are widely used in applications across many fields because of their monomial-order-free property; however, they lose many of the theoretical properties of computer-algebraic algorithms. 
In this paper, we propose general methods that equip monomial-order-free algorithms with several advantageous theoretical properties. Specifically, we exploit the gradient to (i) sidestep the spurious vanishing problem in polynomial time to remove symbolically trivial redundant bases, (ii) achieve consistent output with respect to the translation and scaling of input, and (iii) remove nontrivially redundant bases. The proposed methods work in a fully numerical manner, whereas existing algorithms require the awkward monomial order or exponentially costly (and mostly symbolic) computation to realize properties (i) and (iii). To our knowledge, property (ii) has not been achieved by any existing basis construction algorithm of the approximate vanishing ideal. 
\end{abstract}

\section{Introduction}
A set of data points lies in an algebraic variety\footnote{An algebraic variety here refers to a set of points that can be described as the solutions of a polynomial system and it is not necessarily irreducible.
}---this is a common assumption in various methods in machine learning. For example, linear analysis methods such as principal component analysis are designed to work with data lying in linear subspace, which is a class of algebraic varieties. Broader classes of algebraic varieties are considered in subspace clustering~\cite{vidal2005generalized}, matrix completion~\cite{ongie2017algebraic,li2017riemannian}, and classification~\cite{livni2013vanishing,globerson17effective}. 
In the last decade, the approximate vanishing ideal~\cite{heldt2009approximate} has been considered for the machine-learning problem in the most general setting, namely, retrieving a polynomial system that describes the algebraic variety where noisy data points approximately lie. 
An approximate vanishing ideal of a set of points $X\subset\mathbb{R}^n$ is a set of approximate vanishing polynomials, each of which almost takes a zero value for any $\boldsymbol{x}\in X$. Roughly, 
\begin{align*}
\mathcal{I}_{\mathrm{app}}(X) &= \left\{ g\in\mathcal{P}_n\mid \forall \boldsymbol{x}\in X, g(\boldsymbol{x})\approx 0 \right\},
\end{align*}
where $\mathcal{P}_n$ is the set of all $n$-variate polynomials over $\mathbb{R}$. Various basis construction algorithms for the approximate vanishing ideal have been proposed, first in computer algebra and then in machine learning~\cite{heldt2009approximate,fassino2010almost,livni2013vanishing,limbeck2014computation,kiraly2014dual,kera2018approximate}. However, existing algorithms suffer from the tradeoff between practicality and theoretical soundness. Basis construction algorithms developed in machine learning are more practically convenient and are used in various fields~\cite{zhao2014hand,hou2016discriminative,kera2016vanishing,iraji2017principal,wang2018nonlinear}. This is because these algorithms work with numerical computation and without the monomial order, which is a prefixed prioritization of monomials. Different monomial orders can yield different results, but it is unknown how to properly select a monomial order from exponentially many candidates. However, while enjoying the monomial-order-free property, the basis construction algorithms in machine learning lack various theoretical (and advantageous) properties of computer-algebraic algorithms, which use the practically awkward monomial order and symbolic computation.

In this paper, we propose general and efficient methods that enable monomial-order-free basis construction algorithms in machine learning to have various advantageous theoretical properties. In particular, we address the three theoretical issues listed below. To our knowledge, none of the existing basis construction algorithms can resolve the first and third issues in polynomial time without using a monomial order. Furthermore, the second issue has not been addressed by any existing basis construction methods.

\noindent\textbf{The spurious vanishing problem}---A polynomial $g$ can approximately vanish for a point $\boldsymbol{x}$, i.e., $g(\boldsymbol{x})\approx 0$ not because $\boldsymbol{x}$ is close to the roots of $g$ but merely because $g$ is close to the zero polynomial (i.e., the coefficients of the monomials in $g$ are all small)\footnote{For example, a univariate polynomial $g = x^2-1$ approximately vanishes only for points close to its roots $x=\pm 1$. However, once $g$ is scaled to $kg$ by a small nonzero $k\in\mathbb{R}$, then $kg$ can approximately vanish for points far from its roots. A simple remedy is to normalize $kg$ as $kg/\sqrt{2}k$ using the coefficients.}. To sidestep this, $g$ needs to be normalized by some scale. However, intuitive coefficient normalization is exponentially costly for monomial-order-free algorithms~\cite{kera2019spurious}.

\noindent\textbf{Inconsistency of output with respect to the translation and scaling of input}---Given translated or scaled data, the output of the basis construction can drastically change in terms of the number of polynomials and their nonlinearity, regardless of how well the parameter is chosen.
This contradicts the intuition that the intrinsic structure of an algebraic variety does not change by a translation or scaling on data. 

\noindent\textbf{Redundancy in the basis set}---The output basis set can contain polynomials that are redundant because they can be generated by other lower-degree polynomials\footnote{For example, a polynomial $gh$ is unnecessary if $g$ is included in the basis set.}. Determining the redundancy usually needs exponentially costly symbolic procedures and is also unreliable in our approximate setting.

To efficiently address these issues without symbolic computation, we exploit the gradient of the polynomials at the input points, which has been rarely considered in the relevant literature. The advantages of this approach are that (i) gradient can be efficiently and exactly computed in our setting without differentiation and (ii) it provides some information on the symbolic structure of and symbolic relations between polynomials in a numerical manner.

In summary, we propose fully numerical methods for monomial-order-free algorithms to retain two theoretical properties of computer-algebraic algorithms and gain one new advantageous property. Hence, we exploit the gradient to address the aforementioned three fundamental issues as follows.
\begin{itemize}
    \item We propose gradient normalization to resolve the spurious vanishing problem. A polynomial is normalized by the norm of its gradients at the input points. This approach is based on the intuition that polynomials close to the zero polynomial have a small norm for the gradients at all locations. A rigorous theoretical analysis shows its validity. 
    \item We prove that by introducing gradient normalization, a standard basis construction algorithm can equip a sort of invariance to transformations (scaling and translation) of input data points. The number of basis polynomials at each degree is the same before and after the transformation and the change of each basis polynomial is analytically presented. 
    \item We propose a basis reduction method that considers the linear dependency of gradients between polynomials and removes redundant ones from a basis set without symbolic operations.
\end{itemize}

\section{Related Work}
Based on classical basis construction algorithms for noise-free points~\cite{moller1982construction,kehrein2006computing}, most algorithms of the approximate vanishing ideal in computer algebra efficiently sidestep the issues with the spurious vanishing problem and basis set redundancy using the monomial order and symbolic computation. 
To our knowledge, there are two algorithms that work without the monomial order in computer algebra~\cite{sauer2007approximate,hashemi2019computing}, but both require exponential-time procedures.
Although the gradient has been rarely considered in the basis construction of the (approximate) vanishing ideal, \citeauthor{fassino2010almost}~(\citeyear{fassino2010almost}) used the gradient during basis construction to check whether a given polynomial exactly vanishes after slightly perturbing given points. \citeauthor{vidal2005generalized} et al.~(\citeyear{vidal2005generalized}) considered a union of subspaces for clustering, where the gradient at some points are used to estimate the dimension of each subspace where a cluster lies. Both of these works use the gradient for purposes that are totally different from ours. The closest work to ours is \cite{fassino2013simple}, which proposes an algorithm to compute an approximate vanishing polynomial of low degree based on the geometrical distance using the gradient. However, their algorithm does not compute a basis set but only provide a single approximate vanishing polynomial. Furthermore, the computation relies on the monomial order and coefficient normalization.

\section{Preliminaries\label{sec:Preliminaries}}
Throughout the paper, a polynomial is represented as $h$ without arguments and $h(\boldsymbol{x})\in\mathbb{R}$ denotes the evaluation
of $h$ at a point $\boldsymbol{x}$.

\subsection{From polynomials to evaluation vectors}
\begin{defn}[Vanishing Ideal]\label{def:vanishing-ideal}
Given a set of $n$-dimensional points $X$, the vanishing ideal
of $X$ is a set of $n$-variate polynomials that take a zero value,
(i.e., vanish) for any point in $X$. Formally, 
\begin{align*}
\mathcal{I}(X) & =\left\{ g\in\mathcal{P}_{n}\mid\forall\boldsymbol{x}\in X,g(\boldsymbol{x})=0\right\}.
\end{align*}
\end{defn}
\begin{defn}[Evaluation vector and evaluation matrix]
Given a set of points $X = \{\boldsymbol{x}_1,\boldsymbol{x}_2,...,\boldsymbol{x}_{|X|}\}$, the evaluation vector of a polynomial $h$ is defined
as follows: 
\begin{align*}
h(X) & =\begin{pmatrix}h(\boldsymbol{x}_{1}) & h(\boldsymbol{x}_{2}) & \cdots & h(\boldsymbol{x}_{|X|})\end{pmatrix}^{\top}\in\mathbb{R}^{|X|},
\end{align*}
where $|\cdot|$ denotes the cardinality of a set.
For a set of polynomials $H=\left\{ h_{1},h_{2},\ldots,h_{|H|}\right\} $,
its evaluation matrix is $H(X)=(h_{1}(X)\ h_{2}(X)\ \cdots\ h_{|H|}(X))\in\mathbb{R}^{|X|\times |H|}$. 
\end{defn}
\begin{defn}[$\epsilon$-vanishing polynomial]
A polynomial $g$ is an $\epsilon$-vanishing polynomial for a set of points $X$ if $\|g(X)\|\le\epsilon$, where $\|\cdot\|$ denotes the Euclidean norm; otherwise, $g$ is an $\epsilon$-nonvanishing polynomial. 
\end{defn}
As Definition~\ref{def:vanishing-ideal} indicates, we are only interested in the evaluation values of polynomials for the given set of points $X$. Hence, a polynomial $h$ can be represented by its evaluation vector $h(X)$. As a consequence, the product and weighted sum of polynomials become linear algebra operations. Let us consider a set of polynomials $H=\{h_1,h_2,...,h_{|H|}\}$. A product of $h_1,h_2\in H$ becomes $h_1(X)\odot h_2(X)$, where $\odot$ denotes the entry-wise product. 
A weighted sum $\sum_{i=1}^{|X|} w_{i}h_i$, where $w_i\in\mathbb{R}$, becomes $\sum_{i=1}^{|H|} w_{i}h_i(X)$. The weighted sum of polynomials is an important building block in the following discussion. For convenience of notation, we define a special product between a polynomial set and a vector as $H\boldsymbol{w}:=\sum_{i=1}^{|H|}w_ih_i$, where  $w_i$ is the $i$-th entry of $\boldsymbol{w}\in\mathbb{R}^{|H|}$. Similarly, we denote the product between a polynomial set $H$ and a matrix $W=(\boldsymbol{w}_1\ \boldsymbol{w}_2\ \cdots \boldsymbol{w}_{s})\in\mathbb{R}^{|H|\times s}$ as $HW:=\{H\boldsymbol{w}_1,H\boldsymbol{w}_2,...,H\boldsymbol{w}_{s}\}$. Note that $(H\boldsymbol{w})(X) = H(X)\boldsymbol{w}$ and $(HW)(X) = H(X)W$. 
We consider a set of polynomials that is \textit{spanned} by a set $F$ of nonvanishing polynomials or \textit{generated} by a set $G$ of vanishing polynomials. We denote the former as $\mathrm{span}(F) = \{\sum_{f\in F}a_f f\mid a_f\in\mathbb{R}\}$ and the latter as $\langle G\rangle = \{\sum_{g\in G}h_g g\mid h_g\in\mathcal{P}_n\}$.

\subsection{Simple Basis Construction Algorithm}
Our idea of using the gradient is general enough to be integrated with existing monomial-order-free algorithms. However, to avoid a unnecessarily abstract discussion, we focus on the Simple Basis Construction (SBC) algorithm~\cite{kera2019spurious}, which was proposed by~\cite{kera2019spurious} based on Vanishing Component Analysis~(VCA; \citeauthor{livni2013vanishing}~\citeyear{livni2013vanishing}). 
Most monomial-order-free algorithms can be discussed using SBC; thus, the following discussion is sufficiently general. 

The input to SBC is a set of points $X\subset \mathbb{R}^{n}$ and error tolerance $\epsilon\ge 0$. SBC outputs a basis set $G$ of $\epsilon$-vanishing polynomials and a basis set of $\epsilon$-nonvanishing polynomials $F$. We later discuss the conditions that $G$ and $F$ are required to satisfy (cf., Theorem~\ref{thm:basis}). SBC proceeds from degree-0 polynomials to those of higher degree. At each degree $t$, a set of degree-$t$ $\epsilon$-vanishing polynomials $G_t$ and a set of degree-$t$ $\epsilon$-nonvanishing polynomials $F_t$ are generated. We use notations $F^{t}=\bigcup_{\tau=0}^t F_{\tau}$ and $G^{t}=\bigcup_{\tau=0}^t G_{\tau}$.
For $t=0$, $F_0 = \{m\}$ and $G_0 = \emptyset$, where $m\ne 0$ is a constant polynomial. 
At each degree $t\ge 1$, the following procedures (\texttt{Step 1}, \texttt{Step 2}, and \texttt{Step 3}) are conducted\footnote{For ease of understanding, we describe the procedures in the form of symbolic computation, but these can be numerically implemented (i.e., by matrix-vector calculations)}.

\paragraph*{Step 1: Generate a set of candidate polynomials}
Pre-candidate polynomials of degree $t$ for $t> 1$ are generated by multiplying nonvanishing polynomials across $F_1$ and $F_{t-1}$.
\begin{align*}
    C_t^{\mathrm{pre}} = \{pq \mid p\in F_1, q\in F_{t-1}\}.
\end{align*}
At $t=1$, $C_1^{\mathrm{pre}}=\{x_1,x_2,...,x_n\}$, where $x_k$ are variables. The candidate basis is then generated through the orthogonalization.
\begin{align}\label{eq:orthogonalization}
     C_{t} &= C_{t}^{\mathrm{pre}} - F^{t-1}F^{t-1}(X)^{\dagger}C_{t}^{\mathrm{pre}}(X),
 \end{align}
 where $\cdot^{\dagger}$ is the pseudo-inverse of a matrix.
\paragraph*{Step 2: Solve a generalized eigenvalue problem}{
We solve the following generalized eigenvalue problem: 
\begin{align}\label{eq:gep}
    C_t(X)^{\top}C_t(X)V = \mathfrak{N}(C_t)V\Lambda,
\end{align}
where a matrix $V$ that has generalized  eigenvectors $\boldsymbol{v}_1,\boldsymbol{v}_2,...,\boldsymbol{v}_{|C_t|}$ for its columns, $\Lambda$ is a diagonal matrix with generalized eigenvalues $\lambda_1,\lambda_2,...,\lambda_{|C_t|}$ along its diagonal, and $\mathfrak{N}(C_t)\in\mathbb{R}^{|C_t|\times |C_t|}$ is the normalization matrix, which will soon be introduced.
}
\paragraph*{Step 3: Construct sets of basis polynomials}{
Basis polynomials are generated by linearly combining polynomials in $C_t$ with $\{\boldsymbol{v}_1,\boldsymbol{v}_2,...,\boldsymbol{v}_{|C_t|}\}$.
\begin{align*}
    G_t &= \{C_t\boldsymbol{v}_i\mid \sqrt{\lambda_i} \le \epsilon\}, \\
    F_t &= \{C_t\boldsymbol{v}_i\mid \sqrt{\lambda_i} > \epsilon\}.
\end{align*}
If $|F_t| = 0$, the algorithm terminates with output $G=G^t$ and $F=F^t$.
}

\begin{rem}
At \texttt{Step 1}, Eq.~(\ref{eq:orthogonalization}) makes the column space of $C_{t}(X)$ orthogonal to that of $F^{t-1}(X)$. 
The aim is to focus on the subspace of $\mathbb{R}^{|X|}$ that cannot be spanned by the evaluation vectors of polynomials of degree less than $t$. 
\end{rem}

\begin{rem}
At \texttt{Step 3}, a polynomial $C_t\boldsymbol{v}_i$ is classified as an $\epsilon$-vanishing polynomial if $\sqrt{\lambda_i}\le \epsilon$ because $\sqrt{\lambda_i}$ equals the extent of vanishing of $C_t\boldsymbol{v}_i$. Actually,
\begin{align*}
    \|(C_{t}\boldsymbol{v}_{i})(X)\|  =\sqrt{\boldsymbol{v}_{i}^{\top}C_{t}(X)^{\top}C_{t}(X)\boldsymbol{v}_{i}}=\sqrt{\lambda_{i}}.
\end{align*}
\end{rem}
At \texttt{Step 2}, we have a normalization matrix  $\mathfrak{N}(C_t)\in\mathbb{R}^{|C_t|\times|C_t|}$ to resolve the spurious vanishing problem~\cite{kera2019spurious}. For the coefficient normalization, the coefficient vector\footnote{The coefficient vector of a polynomial is defined as a vector that lists the coefficients of the monomials of the polynomial. For instance, a degree-3 univariate polynomial $g=1-x+2x^3$ has the coefficient vector $(1,-1,0,2)^{\top}$.} of $c_i$ is denoted by $\mathfrak{n}_{\mathrm{c}}(c_i)$, and the $(i,j)$-th entry of $\mathfrak{N}(C_t)$ is $\mathfrak{n}_{\mathrm{c}}(c_i)^{\top}\mathfrak{n}_{\mathrm{c}}(c_j)$. Solving the generalized eigenvalue problem Eq.~(\ref{eq:gep}) with this normalization matrix leads to polynomials $C_t\boldsymbol{v}_1,...,C_t\boldsymbol{v}_{|C_t|}$ at \texttt{Step 3}, which are normalized with respect to their coefficient vectors, i.e., $\forall i,\|\mathfrak{n}_{\mathrm{c}}(C_t\boldsymbol{v}_i)\|=1$. Instead of $\mathfrak{n}_{\mathrm{c}}$, we can also define the normalization matrix from another mapping as long as it satisfies some requirements~\cite{kera2019spurious}. Later, we propose a novel mapping $\mathfrak{n}_{\mathrm{g}}$, which is based on the gradient of a given polynomial. Although this mapping only satisfy the relaxed version of the requirements, we show that the same guarantee for the SBC output (Theorem 2 in \citeauthor{kera2019spurious}~\citeyear{kera2019spurious}) can still be stated with these relaxed requirements~(see the supplementary material for proof).
\begin{thm}\label{thm:basis}\label{THM:BASIS}
Let $\mathfrak{n}$ be a valid normalization mapping for SBC (cf., Definition~\ref{def:normalization-mapping}).
When SBC with $\mathfrak{n}$ runs with $\epsilon=0$ for a set of points $X$, the output basis sets $G$ and $F$ satisfy the following. \begin{itemize}
    \item Any vanishing polynomial $g\in\mathcal{I}(X)$ can be generated by $G$, i.e., $g\in\langle G\rangle$. 
    \item Any polynomial $h$ can be represented by $h = f^{\prime} + g^{\prime}$, where $f^{\prime}\in\mathrm{span}(F)$ and $g^{\prime}\in\langle G\rangle$.
    \item For any $t$, any degree-$t$ vanishing polynomial $g\in\mathcal{I}(X)$ can be generated by $G^t$, i.e., $g\in\langle G^t\rangle$. 
    \item For any $t$, any degree-$t$ polynomial $h$ can be represented by $h = f^{\prime} + g^{\prime}$, where $f^{\prime}\in\mathrm{span}(F^t)$ and $g^{\prime}\in\langle G^t\rangle$.
\end{itemize}
\end{thm}

\section{Proposed Method}
In the literature on the vanishing ideal, polynomials are represented by their evaluation vectors at an input set of points $X$. However, two vanishing polynomials, say $g_1$ and $g_2$, share identical evaluation vectors $g_1(X)=g_2(X)=\boldsymbol{0}$, and thus any information about their symbolic forms cannot be inferred from these vectors. 
In this paper, we propose to use the gradient as a key tool to deal with polynomials in a fully numerical way. Specifically, given a polynomial $h$, we consider the evaluation of its partial derivatives $\nabla h := \{\partial h/\partial x_1,\partial h/\partial x_2,\cdots,\partial h/\partial x_n\}$ at the given set of data points $X \subset\mathbb{R}^n$; that is, from the definition of the evaluation matrix, we consider
\begin{align*}
    \nabla h(X) & =\left(\begin{array}{cccc}
    \frac{\partial h}{\partial x_1}(X) & 
    \frac{\partial h}{\partial x_2}(X) & 
    \cdots & \frac{\partial h}{\partial x_n}(X)
    \end{array}\right),
\end{align*}
which can be efficiently and exactly calculated without differentiation by taking advantage of the iterative framework of the basis construction. Interestingly, one can infer the symbolic structure of a vanishing polynomial $g$ from $\nabla g(X)$. For example, if $(\partial g/\partial x_k)(X) \approx \boldsymbol{0}$, then the variable $x_k$ is unlikely to be dominant in $g$; if $(\partial g/\partial x_k)(X)\approx\boldsymbol{0}$ for all $k$, then $g$ can be close to the zero polynomial. One may argue that a nonzero vanishing polynomial $g$ can take $\nabla g(X)=O$. However, such $g$ is revealed to be \textit{redundant} in the basis set, and thus it can be excluded from our consideration (cf., Lemmas~\ref{lem:multiplicity-G} and~\ref{lem:multiplicity-F}). Next, we ask whether any symbolic relation between vanishing polynomials $g_1$ and $g_2$ is reflected in the relation between $\nabla g_1(X)$ and $\nabla g_2(X)$. The answer is yes; if $g_2$ is a polynomial multiple of $g_1$, i.e., $g_2 = g_1h$ for some $h\in\mathcal{P}_n$, then for any $\boldsymbol{x}\in X$, $\nabla g_1(\boldsymbol{x})$ and $\nabla g_2(\boldsymbol{x})$ are identical up to a constant scale. A more general symbolic relation between polynomials is discussed in Conjecture~\ref{conj:multiplicity-theorem}. The proofs of our claims are provided in the supplementary material for reasons of  space.

\subsection{Gradient normalization for the spurious vanishing problem}\label{gradient-based-normalization}
The spurious vanishing problem is resolved by normalizing polynomials for some scale. Here, we propose gradient normalization, which normalizes polynomials using the norm of their gradient. Specifically, a polynomial $h$ is normalized with the norm of the vector
\begin{align}
    \mathfrak{n}_{\mathrm{g}}(h;X) &= \mathrm{vec}(\nabla h(X)) \in \mathbb{R}^{|X|n},\label{eq:grad-mapping}
\end{align}
where $\mathrm{vec}(\cdot)$ denotes the vectorization of a given matrix. We refer to the norm $\|\mathfrak{n}_{\mathrm{g}}(h;X)\|$ as the gradient norm of $h$. By solving Eq.~(\ref{eq:gep}) in \texttt{Step 2}, the basis polynomials of vanishing polynomials and nonvanishing polynomials (say, $h$) are normalized such that $\|\mathfrak{n}_{\mathrm{g}}(h;X)\|=1$. Conceptually, this rescales $h$ with respect to the gradient norm as $h/\|\mathfrak{n}_{\mathrm{g}}(h;X)\|$, but in an optimal way~\cite{kera2019spurious}. 
The gradient normalization is superior to the coefficient normalization in terms of computational cost; the former works in polynomial time complexity~(cf., Proposition~\ref{prop:complexity-of-normalization}) and the latter requires exponential time complexity. SBC using $\mathfrak{n}_{\mathrm{g}}$ (SBC-$\mathfrak{n}_{\mathrm{g}}$) set $m$ of $F_0=\{m\}$ to the mean absolute value of $X$ for consistency.

The gradient normalization is based on a shift in thinking on ``being close to the zero polynomial''. Traditionally, the closeness was measured based on the coefficients---polynomials with small coefficients are considered close to the zero polynomial. On the other hand, the gradient normalization is based on the gradient norm; that is, if the gradient of a polynomial has a small norm at all the given points, then the polynomial is considered close to the zero polynomial. 

A natural concern about the gradient normalization is that the gradient norm $\|\mathfrak{n}_{\mathrm{g}}(h;X)\|$ can be equal to zero even for a nonzero polynomial $h$. In other words, what if all partial derivatives $\partial h/\partial x_k$ are vanishing for $X$, i.e., $(\partial h/\partial x_k)(X)=\boldsymbol{0}$? Solving the generalized eigenvalue problem Eq.~(\ref{eq:gep}) only provides polynomials with the nonzero gradient norm. Is it sufficient for basis construction to only collect such polynomials? The following two lemmas answer this question affirmatively.

\begin{lem}\label{lem:multiplicity-G}
Suppose that $G^t \subset \mathcal{P}_n$ is a basis set of vanishing polynomials of degree at most $t$ for a set of points $X$ such that for any $\widetilde{g}\in\mathcal{I}(X)$ of degree at most $t$, $\widetilde{g}\in\langle G^t\rangle$.
Then, for any $g\in\mathcal{I}(X)$ of degree $t+1$, if $(\partial g/\partial x_k)(X)=\boldsymbol{0}$ for all $k=1,2,...,n$, then $g \in \langle G^t \rangle$.
\end{lem}
\begin{lem}\label{lem:multiplicity-F}
Suppose that $F^t\subset\mathcal{P}_n$ is a basis set of nonvanishing polynomials of degree at most $t$ for a set of points $X$ such that for the evaluation vector $\widetilde{f}(X)$ of any nonvanishing polynomial $\widetilde{f}$ of degree at most $t$, $\widetilde{f}(X)\in\mathrm{span}(F^t(X))$.
Then, for any nonvanishing polynomial $f\in\mathcal{P}_n$ of degree $t+1$, if $(\partial f/\partial x_k)(X)=0$ for all $k=1,2,...,n$, then $f(X) \in \mathrm{span}(F^t(X))$.
\end{lem}
These two lemmas imply that we do not need polynomials with zero gradient norms for constructing basis sets because these polynomials can be described by basis polynomials of lower degrees. Therefore, it is valid to use $\mathfrak{n}_{\mathrm{g}}$ for the normalization in SBC.
Formally, we define the validity of the normalization mapping for a basis construction as follows.
\begin{defn}[Valid normalization mapping for $\mathcal{A}$] Let \label{def:normalization-mapping}
$\mathfrak{n}:\mathcal{P}_n\to \mathbb{R}^{\ell}$ be a mapping that satisfies the following.
\begin{itemize}
    \item $\mathfrak{n}$ is a linear mapping, i.e., $\mathfrak{n}(ah_1+bh_2) = a\mathfrak{n}(h_1)+b\mathfrak{n}(h_2)$, for any $a,b\in\mathbb{R}$ and any $h_1,h_2\in\mathcal{P}_n$.
    \item The dot product is defined between normalization components; that is, $\langle\mathfrak{n}(h_1),\mathfrak{n}(h_2) \rangle$ is defined for any $h_1,h_2\in\mathcal{P}_n$.
    \item In a basis construction algorithm $\mathcal{A}$, $\mathfrak{n}(h)$ takes the zero value only for polynomials that can be generated by basis polynomials of lower degrees.
\end{itemize}
Then, $\mathfrak{n}$ is a valid normalization mapping for $\mathcal{A}$, and $\mathfrak{n}(h)$ is called the normalization component of $h$.
\end{defn}
As the third condition implies, this definition is dependent on the algorithm $\mathcal{A}$. The third condition is the relaxed condition of that in ~\cite{kera2019spurious}, where $\mathfrak{n}(h)$ is required to take a zero value if and only if $h$ is the zero polynomial. 
Now, we can readily show that $\mathfrak{n}_{\mathrm{g}}$ is a valid normalization mapping for SBC.
\begin{thm}
The mapping $\mathfrak{n}_{\mathrm{g}}$ of Eq.~(\ref{eq:grad-mapping}) is a valid normalization mapping for SBC.
\end{thm}
We emphasize that gradient normalization is essentially different from coefficient normalization because it is a data-dependent normalization. The following proposition holds thanks to this data-dependent nature, which argues for consistency in the output of SBC-$\mathfrak{n}_{\mathrm{g}}$ with respect to a translation or scaling of the input data points.
\begin{prop}\label{prop:invariance}
Suppose SBC-$\mathfrak{n}_{\mathrm{g}}$ outputs $(G,F)$ for input $(X,\epsilon)$, $(\widetilde{G},\widetilde{F})$ for input $(X-\boldsymbol{b},\epsilon)$, and $(\widehat{G},\widehat{F})$ for input $(\alpha X,|\alpha|\epsilon)$, where $X-\boldsymbol{b}$ denotes the translation of each point in $X$ by $\boldsymbol{b}$ and $\alpha X$ denotes the scaling by $\alpha\ne 0$.
\begin{itemize}
    \item $G,\widetilde{G}$, and $\widehat{G}$ have exactly the same number of basis polynomials at each degree.
    \item $F,\widetilde{F}$, and $\widehat{F}$ have exactly the same number of basis polynomials at each degree.
    \item Any pair of the corresponding polynomials $\widetilde{h}\in \widetilde{G}\cup \widetilde{F}$ and $h\in G\cup F$ satisfies $h(x_1,x_2,...,x_n)=\widetilde{h}(x_1+b_1,x_1+b_2,...,x_n+b_n)$, where $h(x_1,x_2,...,x_n)$ here denotes a polynomial in $n$ variables $x_1,x_2,...,x_n$ and $\boldsymbol{b}=(b_1,b_2,...,b_n)^{\top}$.
    \item For any pair of the corresponding polynomials $\widehat{h}\in \widehat{G}\cup \widehat{F}$ and $h\in G\cup F$, $\widehat{h}$ is the $(1,\alpha)$-degree-wise identical\footnote{The gist of this property will be explained soon. The definition can be found in the supplementary material.} to $h$.
\end{itemize}
\end{prop}
The first two statements of Proposition~\ref{prop:invariance} argue that translation and scaling on the input points do not affect the inferred dimensionality of the algebraic set where the noisy data approximately lie; an algebraic variety should be described by the same number of polynomials of the same nonlinearity before and after these data transformations. Although this intuition seems natural, to our knowledge, no existing basis construction algorithms have this property. The third statement of Proposition~\ref{prop:invariance} argues that basis polynomials for translated data are polynomials with a variable translation from those of the untranslated data. 
Note that it is not trivial that the \textit{algorithm} outputs these translated polynomials. 
VCA has this translation-invariance property, 
whereas most other basis construction algorithms, including SBC with the coefficient normalization, do not. The last statement of Proposition~\ref{prop:invariance} is the most interesting property and is not held by any other basis construction algorithms, to our knowledge. The $(1,\alpha)$-degree-wise identicality between the corresponding $\widehat{h}\in\widehat{G}\cup \widehat{F}$ and $h\in G\cup F$ implies the following relation:
\begin{align}\label{eq:linear-response}
    \widehat{h}(\alpha X) &= \alpha h(X).
\end{align}
In words, scaling by $\alpha$ on input $X$ of SBC-$\mathfrak{n}_{\mathrm{g}}$ only affects \textit{linearly} the evaluation vectors of the \textit{nonlinear} output polynomials. Thus, we only need linearly scaled threshold $\alpha\epsilon$ for $\alpha X$. Without this property, linear scaling on the input leads to nonlinear scaling on the evaluation of the output polynomials; thus, a consistent result cannot be obtained regardless of how well $\epsilon$ is chosen. Symbolically, $(1,\alpha)$-degree-wise identicality implies that $h$ and $\widehat{h}$ consist of the same terms up to a scale, and the corresponding terms $m$ of $h$ and $\widehat{m}$ of $\widehat{h}$ relate as $\widehat{m} = {\alpha}^{1-\tau}m$. This implies that larger $\alpha$ decreases the coefficients of higher-degree terms more sharply. This is quite natural because highly nonlinear terms grow sharply as the input value increases. 
One may argue that any basis construction algorithm could obtain translation- and scale-invariance by introducing a preprocessing stage for input $X$, such as mean-centralization and normalization. Although preprocessing can be helpful in some practical scenarios, it discards the mean and scale information, and thus the output basis sets do not reflect this information. 
In contrast, the output polynomials of SBC-$\mathfrak{n}_{\mathrm{g}}$ reflect the mean and scale, but in a convenient form. 

\subsection{Removal of redundant basis polynomials}
The monomial-order-free algorithms tend to output a large basis set of vanishing polynomials that contains redundant basis polynomials. Specifically, let $G$ be an output basis set of vanishing polynomials ($\epsilon=0$). Then, $G$ can contain redundant polynomials (say, $g\in G$) that can be generated from polynomials of lower degrees in $G$; that is, with some polynomials $\{h_{g^{\prime}}\}\subset \mathcal{P}_n$, 
\begin{align}\label{eq:redundance}
    g = \sum_{g^{\prime}\in G^{\mathrm{deg}(g)-1}} h_{g^{\prime}}g^{\prime},
\end{align}
which is equivalent to $g\in \langle G^{\mathrm{deg}(g)-1}\rangle$. To determine whether $g\in \langle G^{\mathrm{deg}(g)-1}\rangle$ or not for a given $g$, a standard approach in computer algebra is to divide $g$ by the Gr\"obner basis of $G^{\mathrm{deg}(g)-1}$. However, the complexity of computing a Gr\"obner basis is known to be doubly exponential~\cite{cox1992ideals}. Polynomial division also needs an expanded form of $g$, which is also computationally costly to obtain. Moreover, this polynomial division-based approach is not suitable for the approximate setting, where $g$ may be approximately generated by polynomials in $G^{\mathrm{deg}(g)-1}$. Thus, we would like to handle the redundancy in a numerical way using the evaluation values at points. However, (exact) vanishing polynomials have the same evaluation vectors $\boldsymbol{0}$. 

Here again, we can resort to the gradient of the polynomials, whose evaluation values are proven to be nonvanishing at input points (Lemma~\ref{lem:multiplicity-G}). In short, we consider $g$ as redundant if for any point $\boldsymbol{x}\in X$, the gradient $\nabla g(\boldsymbol{x})$ is linearly dependent on that of the polynomials in $G^{\mathrm{deg}(g)-1}$.

\begin{conj}\label{conj:multiplicity-theorem}
Let $G$ be a basis set of a vanishing ideal $\mathcal{I}(X)$, which is output by SBC with $\epsilon=0$. Then, $g\in G$ is $g\in \langle G^{\mathrm{deg}(g)-1}\rangle$ if and only if for any $\boldsymbol{x}\in X$, 
\begin{align}
    \nabla g(\boldsymbol{x}) &= \sum_{g^{\prime}\in  G^{\mathrm{deg}(g)-1}} \alpha_{g^{\prime},\boldsymbol{x}} \nabla g^{\prime}(\boldsymbol{x}),\label{eq:span-by-other-gradient}
\end{align}
for some $\alpha_{g^{\prime},\boldsymbol{x}}\in\mathbb{R}$.
\end{conj}
The sufficient condition (``if" statement) can be readily proven by differentiating $g=\sum_{g^{\prime}\in G^{\mathrm{deg}(g)-1}}g^{\prime}h_{g^{\prime}}$ and using $g^{\prime}(\boldsymbol{x})=0$ (see the supplementary material).
Using the sufficiency, we can remove all the redundant polynomials in the form of Eq.~(\ref{eq:redundance}) from the basis set by checking whether or not Eq.~(\ref{eq:span-by-other-gradient}) holds. Note that we may accidentally remove some basis polynomials that are not redundant because the necessity (``only if" statement) remains to be proven.  Conceptually, the necessity implies that one can know the global (symbolic) relation $g\in \langle G^{\mathrm{deg}(g)-1}\rangle$ from the local relation Eq.~(\ref{eq:span-by-other-gradient}) at finitely many points $X$. This may not be true for general $g$ and $G^{\mathrm{deg}(g)-1}$. However, $g$ and $G^{\mathrm{deg}(g)-1}$ are both generated in a very restrictive way, and this is why we suspect that this conjecture can be true.

We can support the validity of using Conjecture~\ref{conj:multiplicity-theorem} from another perspective. When Eq.~(\ref{eq:span-by-other-gradient}) holds, this implies the following: using the basis polynomials of lower degrees, one can generate a polynomial $\widehat{g}$ that takes the same value and gradient as $g$ at all the given points; in short, $\widehat{g}$ behaves identically to $g$ up to the first order for all the points. According to the spirit of the vanishing ideal---identifying a polynomial only by its behavior for given points---it is reasonable to consider $g$ as ``redundant" for practical use. 

Lastly, we describe how to use Conjecture~\ref{conj:multiplicity-theorem} to remove redundant polynomials. Given $g$ and $G^{\mathrm{deg}(g)-1}$, we solve the following least squares problem for each $\boldsymbol{x}\in X$: 
\begin{align}
    \min _{\boldsymbol{v}\in\mathbb{R}^{|G^{\mathrm{deg}(g)-1}|}}\|\nabla g(\boldsymbol{x}) - \boldsymbol{v}^{\top}\nabla G^{\mathrm{deg}(g)-1}(\boldsymbol{x}) \|,
\end{align}
where $\nabla G^{\mathrm{deg}(g)-1}(\boldsymbol{x}) \in \mathbb{R}^{|G^{\mathrm{deg}(g)-1}|\times n}$ is a matrix that stacks $\nabla g^{\prime}(\boldsymbol{x})$ for $g^{\prime}\in G^{\mathrm{deg}(g)-1}$ in each row (note that $\nabla g(\boldsymbol{x})\in\mathbb{R}^{1\times n}$). This problem has a closed-form solution $\boldsymbol{v}^{\top}=\nabla g(\boldsymbol{x})\nabla G^{\mathrm{deg}(g)-1}(\boldsymbol{x})^{\dagger}$. If the residual error is zero for all the points in $X$, then $g$ is removed as a redundant polynomial. In the approximately vanishing case ($\epsilon > 0$), we set a threshold for the residual error. The procedure above can be performed during or after basis construction. When the basis construction is not normalized using $\mathfrak{n}_{\mathrm{g}}$, it is also necessary to check the linear dependency of the gradient within $G_t$ (see the supplementary material for details). 

\subsection{Compute the gradient without differentiation}
In our setting, exact gradients for input points can be computed without differentiation. 
Recall that at degree $t$, \texttt{Step 3} of SBC computes linear combinations of the candidate polynomials in $C_{t}$. Noting that $C_{t}$ is generated from the linear combinations of $C_{t}^{\mathrm{pre}}$ and $F^{t-1}$, any $h\in\mathrm{span}(C_{t})$ can be described as 
\begin{align*}
    h &= \sum_{c\in C_{t}^{\mathrm{pre}}}u_c c + \sum_{f\in F^{t-1}}v_f f,
\end{align*}
where $u_c,v_f\in\mathbb{R}$.
Note that $c\in C_{t}^{\mathrm{pre}}$ is a product of a polynomial in $F_1$ and a polynomial in $F_{t-1}$. Let $p_c\in F_1$ and $q_c\in F_{t-1}$ be such polynomials, i.e., $c=p_cq_c$. Using the product rule, the evaluation of $\partial h/\partial x_k$ for $\boldsymbol{x}\in X$ is then
\begin{align}
    \frac{\partial h}{\partial x_k}(\boldsymbol{x}) 
    &= \sum_{c\in C_{t}^{\mathrm{pre}}}u_c q_c(\boldsymbol{x})\frac{\partial p_c}{\partial x_k}(\boldsymbol{x}) + \sum_{c\in C_{t}^{\mathrm{pre}}}u_c p_c(\boldsymbol{x})\frac{\partial q_c}{\partial x_k}(\boldsymbol{x})\nonumber\\
    &\quad\quad + \sum_{f\in F^{t-1}}v_f \frac{\partial f}{\partial x_k}(\boldsymbol{x}).\label{eq:derivative-chain}
\end{align}
Note that $p_c(\boldsymbol{x})$, $q_c(\boldsymbol{x})$, $(\partial p_c/\partial x_k)(\boldsymbol{x})$, $(\partial q_c/\partial x_k)(\boldsymbol{x})$, and $(\partial f/\partial x_k)(\boldsymbol{x})$ have already been calculated in the previous iterations up to degree $t-1$. For degree $t=1$, the gradients of the linear polynomials are the combination vectors $\boldsymbol{v}_i$ obtained in \texttt{Step 2}. Thus, $\nabla h(X)$ can be exactly calculated without differentiation using the results at lower degrees. 

\begin{prop}\label{prop:complexity-of-normalization}
Suppose we perform SBC for a set of points $X\in\mathbb{R}^n$. At the iteration for degree $t$, 
for any polynomial $h \in \mathrm{span}(C_{t}\cup F^{t-1})$ and any point $\boldsymbol{x}\in\mathbb{R}^{n}$, we can compute $\nabla h(\boldsymbol{x})$ without differentiation with a computational cost of $O(n|C_{t}|)=O(n\mathrm{rank}(X)|X|)$.
\end{prop}
This computational cost $O(n\mathrm{rank}(X)|X|)$ is quite acceptable, noting that generating $C_t$ already needs $O(\mathrm{rank}(X)|X|)$ and solving Eq.~(\ref{eq:gep}) needs $O(|C_t|^3)=O(\mathrm{rank}(X)^3|X|^3)$.
Moreover, in this analysis, we use a very rough relation $O(|F_t|)=|X|$, whereas $|F_t|\ll |X|$ in practice (see the supplementary material).
Giving up the exact calculation, one can further reduce the runtime by restricting the variables and points to be taken into account. That is, a normalized component of a polynomial $h$ can be $\widehat{\mathfrak{n}}_{\mathrm{g}}(h) = \nabla_{\Omega}h(Y)$, where $\Omega\subset \{1,2,...,n\}$, $Y \subset X$, and $\nabla_{\Omega}h=\{\partial h/\partial x_i\mid i\in\Omega\}$. For example, $\Omega$ can be the index set of variables that have large variance and $Y$ as the centroids of clusters on $X$.

\section{Results}
We compare four basis construction algorithms, VCA, SBC with the coefficient normalization (SBC-$\mathfrak{n}_{\mathrm{c}}$), SBC-$\mathfrak{n}_{\mathrm{g}}$, and SBC-$\mathfrak{n}_{\mathrm{c}}$ with the basis reduction. All experiments were performed using Julia implementations on a desktop machine with an eight-core processor and 32 GB memory.
\begin{figure*}
\includegraphics[scale=0.268]{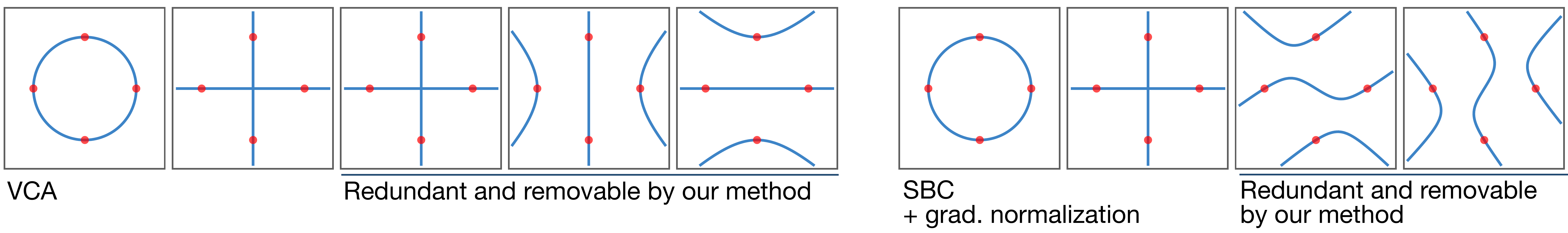}\caption{ Sets of vanishing polynomials obtained by VCA (left panel) and by SBC-$\mathfrak{n}_{\mathrm{g}}$ (right panel). Both sets contain redundant basis polynomials (the last three in the left panel and the last two in the right panel), which can be efficiently removed by the proposed method based on Conjecture~\ref{conj:multiplicity-theorem}.
\label{fig:result-redundancy}}
\end{figure*}

\subsection{Basis reduction using the gradient}
We confirm that redundant basis sets can be reduced by our basis reduction method. We consider the vanishing ideal of $X=\{(1,0),(0,1),(-1,0),(0,-1)\}$ in a noise-free setting, where the exact Gr\"obner basis and polynomial division can be computed to verify our reduction. 
As shown in Fig.~\ref{fig:result-redundancy}, the VCA basis set consists of five vanishing polynomials and the SBC-$\mathfrak{n}_{\mathrm{g}}$ basis set consists of four vanishing polynomials. These basis sets share two polynomials, $g_1=x^2+y^2-1$  and $g_2=xy$ (the constant scale is ignored). A simple calculation using the Gr\"obner basis of $\{g_1,g_2\}$ reveals that the other polynomials in each basis set can be generated by $\{g_1,g_2\}$. Using our  basis reduction method, both basis sets were successfully reduced to $\{g_1,g_2\}$. Other examples and the noisy case can be found in the supplementary material.

\subsection{Comparison of basis sets}
\begin{table}
\caption{Comparison of basis sets obtained by SBC with different normalization ($\mathfrak{n}_{\mathrm{c}}$ and $\mathfrak{n}_{\mathrm{g}}$). Here, $\mathfrak{n}$\textit{-ratio} denotes the ratio of the largest norm to the smallest norm of the polynomials in the basis set with respect to $\mathfrak{n}$. }\label{table:basis-comparison}
\begin{tabular}{|c|c|c|c|c|c|}
\hline 
 &  & \# of bases & $\mathfrak{n}_{\mathrm{c}}$-ratio & $\mathfrak{n}_{\mathrm{g}}$-ratio & runtime (ms)\tabularnewline
\hline 
\hline 
\multirow{2}{*}{D$_1$} & $\mathfrak{n}_{\mathrm{c}}$ & 41 & 1.00  & 12.2e+2 & 48.0e+1\tabularnewline
\cline{2-6} 
 & $\mathfrak{n}_{\mathrm{g}}$ & 30 & 46.6 & 1.00 & \textbf{13.4}\tabularnewline
\hline 
\multirow{2}{*}{D$_2$} & $\mathfrak{n}_{\mathrm{c}}$ & 70 & 1.00 & 19.6e+2 & 17.5e+3\tabularnewline
\cline{2-6} 
 & $\mathfrak{n}_{\mathrm{g}}$ & 33 & 76.9 & 1.00 & \textbf{11.4}\tabularnewline
\hline 
\end{tabular}
\end{table}

We construct two datasets (D$_1$ and D$_2$, respectively) from two algebraic varieties: (i) triple concentric ellipses (radii $(\sqrt{2}, 1/\sqrt{2})$, $(2\sqrt{2}, 2/\sqrt{2})$, and $(3\sqrt{2}, 3/\sqrt{2})$) with $3\pi/4$ rotation and (ii) $\left\{ x_1x_3-x_2^{2},x_1^{3}-x_2x_3\right\}$. 
From each of them, 75 points and 100 points are randomly sampled. Five additional variables $y_i = k_ix_1+(1-k_i)x_2$ for $k_i\in\{0.0, 0.2, 0.5, 0.8, 1.0\}$ are added to the former and nine additional variables $y_i = k_ix_1+l_ix_2+(1-k_i-l_i)x_3$ for $(k_i,l_i)\in \{0.2,0.5,0.8\}^2$ are added to the latter. Then, sampled points are mean-centralized and perturbed by additive Gaussian noise. The mean of the noise is set to zero, and the standard deviation is set to 5\% of the average absolute value of the points. The parameter $\epsilon$ is selected so that (i) the number of linear vanishing polynomials in the basis set agrees with the number of additional variables $y_i$ and (ii) except for these linear polynomials, the lowest degree (say, $d_{\mathrm{min}}$) of the polynomials agree with that of the Gr\"obner basis of the target variety and the number of degree-$d_{\mathrm{min}}$ polynomials in the basis set agrees with or exceeds that of the Gr\"obner basis. Refer to the supplementary material for details. 
As can be seen from Table~\ref{table:basis-comparison}, SBC-$\mathfrak{n}_{\mathrm{g}}$ runs substantially faster than SBC-$\mathfrak{n}_{\mathrm{c}}$ (about 10 times faster in D$_1$ and about $10^3$ times faster in D$_2$). Here, $\mathfrak{n}$-ratio denotes the ratio of the largest to smallest norms of the polynomials in a basis set with respect to $\mathfrak{n}$. Hence, $\mathfrak{n}_{\mathrm{c}}$-ratio and $\mathfrak{n}_{\mathrm{g}}$-ratio are unity for SBC-$\mathfrak{n}_{\mathrm{c}}$ and SBC-$\mathfrak{n}_{\mathrm{g}}$, respectively. 
Here, VCA is not compared because a proper $\epsilon$ could not be found; if the correct number of linear vanishing polynomials were found by VCA, then the  degree-$d_{\mathrm{min}}$ polynomials could not be found, and vice versa. This implies the importance of sidestepping the spurious vanishing problem by normalization.  

\subsection{Classification}
\begin{table}
\caption{Classification results. Here, \textit{dim.} denotes the dimensionality of the extracted features, i.e., the length of $\mathcal{F}(\boldsymbol{x})$, and \textit{br.} denotes the basis reduction. The result of the linear classifier (LC) is shown for reference. The results were averaged over ten independent runs.}\label{table:classification}
\begin{tabular}{|c|c|c|c|c|c||c|}
\hline 
 &  & VCA & \multicolumn{3}{c||}{SBC} & LC\tabularnewline
\hline 
 &  &  & $\mathfrak{n}_{\mathrm{c}}$ & $\mathfrak{n}_{\mathrm{g}}$ & $\mathfrak{n}_{\mathrm{g}}$+br. & \tabularnewline
\hline 
\hline 
\multirow{2}{*}{Iris} & dim. & 80.0 & 44.7 & 148 & \textbf{24.4} & 4\tabularnewline
\cline{2-7} 
 & error & 0.04 & \textbf{0.03} & 0.04 & 0.08 & 0.17\tabularnewline
\hline 
\multirow{2}{*}{Vowel} & dim. & 4744 & 3144 & 3033 & \textbf{254} & 13\tabularnewline
\cline{2-7} 
 & error & 0.44 & \textbf{0.33} & 0.45 & 0.40 & 0.67\tabularnewline
\hline 
\multirow{2}{*}{Vehicle} & dim. & 8205 & 6197 & 5223 & \textbf{260} & 18\tabularnewline
\cline{2-7} 
 & error & 0.18 & 0.22 & \textbf{0.16} & 0.25 & 0.28\tabularnewline
\hline 
\end{tabular}
\end{table}

We compared the basis sets obtained by different basis construction algorithms in the classification tasks. This experiment aims at observing the output of basis construction algorithms for data points not lying on an algebraic variety, and for $\epsilon$ that is tuned for a lower classification error. Following \cite{livni2013vanishing}, the feature vector $\mathcal{F}(\boldsymbol{x})$ of a data point $\boldsymbol{x}$ was defined as 
\begin{align}\label{eq:feature}
\mathcal{F}(\boldsymbol{x}) & =\Bigl(\cdots,\underbrace{\left|g_{1}^{(i)}(\boldsymbol{x})\right|,\cdots,\left|g_{|G_{i}|}^{(i)}(\boldsymbol{x})\right|}_{G_{i}},\cdots\Bigr)^{\top},
\end{align}
where $G_{i}=\{g_{1}^{(i)},...,g_{|G_{i}|}^{(i)}\}$ is the basis set computed for the data points of the $i$-th class. Because of its construction, the $G_i$ part of $\mathcal{F}(\boldsymbol{x})$ is expected to take small values if $\boldsymbol{x}$ belongs to the $i$-th class. 
We trained $\ell_2$-regularized logistic regression with a one-versus-the-rest strategy using LIBLINEAR~\cite{rong09liblinear}. 
We used three small standard datasets (Iris, Vowel, and Vehicle) from the UCI dataset repository~\cite{Lichman2013machine}. Parameter $\epsilon$ was selected by 3-fold cross-validation. Because Iris and Vehicle do not have prespecified training and test sets, we randomly split each dataset into a training set (60\%) and test set (40\%), which were mean-centralized and normalized so that the mean norm of data points is equal to one. The result is summarized in Table~\ref{table:classification}. 
Both SBC-$\mathfrak{n}_{\mathrm{c}}$ and SBC-$\mathfrak{n}_{\mathrm{g}}$ achieved a classification error that is comparable or lower than that of VCA with a much lower dimensionality of feature vectors. In particular, the basis reduction drastically reduces the dimensionality of the feature with a slight change in error. Interestingly, the classification error of VCA is mostly comparable with that of other methods despite many spurious vanishing polynomials and redundant polynomials. We consider this is because these polynomials have little effect on the training of a classifier; spurious vanishing polynomials just extend the feature vector with entries that are close to zero, and redundant basis polynomials behaves like a ``copy" of other non-redundant basis polynomials.
It is interesting to construct the feature vector using discriminative information between classes using discriminative basis construction algorithms, e.g., \cite{kiraly2014dual,hou2016discriminative}. One can consider normalization and basis reduction for these algorithms, but this is beyond the scope of this paper.

\section{Conclusion}
In this paper, we proposed to exploit the gradient of polynomials in the monomial-order-free basis construction of the approximate vanishing ideal. The gradient allows us to access some of the symbolic structure of polynomials and symbolic relations between polynomials. As a consequence, we overcome several theoretical issues in existing monomial-order-free algorithms in a numeraical manner. Specifically, the spurious vanishing problem is resolved in polynomial time complexity for the first time; translation and scaling on the input data lead to consistent changes in the basis set while maintaining the same number polynomials of and same nonlinearity; and redundant basis polynomials are removed from the basis set. These results are achieved efficiently because the gradient of the polynomials at input data points can be exactly computed without differentiation. We believe that this work opens up a new path for monomial-order-free algorithms, which have been theoretically difficult to handle. Future work includes proof of Conjecture~\ref{conj:multiplicity-theorem}. It would also be interesting to replace the coefficient normalization of basis construction algorithms in computer algebra with our gradient normalization. 

\section{Acknowledgement}
This work was supported by JSPS KAKENHI Grant Number 17J07510.

\bibliographystyle{aaai}

\begin{thebibliography}{}

\bibitem[\protect\citeauthoryear{Cox, Little, and O'shea}{1992}]{cox1992ideals}
Cox, D.; Little, J.; and O'shea, D.
\newblock 1992.
\newblock {\em Ideals, varieties, and algorithms}, volume~3.
\newblock Springer.

\bibitem[\protect\citeauthoryear{Fan \bgroup et al\mbox.\egroup
  }{2008}]{rong09liblinear}
Fan, R.-E.; Chang, K.-W.; Hsieh, C.-J.; Wang, X.-R.; and Lin, C.-J.
\newblock 2008.
\newblock {LIBLINEAR}: A library for large linear classification.
\newblock {\em Journal of Machine Learning Research} 9:1871--1874.

\bibitem[\protect\citeauthoryear{Fassino and
  Torrente}{2013}]{fassino2013simple}
Fassino, C., and Torrente, M.-L.
\newblock 2013.
\newblock Simple varieties for limited precision points.
\newblock {\em Theoretical Computer Science} 479:174--186.

\bibitem[\protect\citeauthoryear{Fassino}{2010}]{fassino2010almost}
Fassino, C.
\newblock 2010.
\newblock Almost vanishing polynomials for sets of limited precision points.
\newblock {\em Journal of Symbolic Computation} 45:19--37.

\bibitem[\protect\citeauthoryear{Globerson, Livni, and
  Shalev-Shwartz}{2017}]{globerson17effective}
Globerson, A.; Livni, R.; and Shalev-Shwartz, S.
\newblock 2017.
\newblock Effective semisupervised learning on manifolds.
\newblock In Kale, S., and Shamir, O., eds., {\em Proceedings of the 2017
  Conference on Learning Theory (COLT)},  978--1003.
\newblock PMLR.

\bibitem[\protect\citeauthoryear{Hashemi, Kreuzer, and
  Pourkhajouei}{2019}]{hashemi2019computing}
Hashemi, A.; Kreuzer, M.; and Pourkhajouei, S.
\newblock 2019.
\newblock Computing all border bases for ideals of points.
\newblock {\em Journal of Algebra and Its Applications} 18(06):1950102.

\bibitem[\protect\citeauthoryear{Heldt \bgroup et al\mbox.\egroup
  }{2009}]{heldt2009approximate}
Heldt, D.; Kreuzer, M.; Pokutta, S.; and Poulisse, H.
\newblock 2009.
\newblock Approximate computation of zero-dimensional polynomial ideals.
\newblock {\em Journal of Symbolic Computation} 44:1566--1591.

\bibitem[\protect\citeauthoryear{Hou, Nie, and
  Tao}{2016}]{hou2016discriminative}
Hou, C.; Nie, F.; and Tao, D.
\newblock 2016.
\newblock Discriminative vanishing component analysis.
\newblock In {\em Proceedings of the Thirtieth AAAI Conference on Artificial
  Intelligence (AAAI)},  1666--1672.
\newblock AAAI Press.

\bibitem[\protect\citeauthoryear{Iraji and Chitsaz}{2017}]{iraji2017principal}
Iraji, R., and Chitsaz, H.
\newblock 2017.
\newblock Principal variety analysis.
\newblock In {\em Proceedings of the 1st Annual Conference on Robot Learning},
  97--108.
\newblock PMLR.

\bibitem[\protect\citeauthoryear{Kehrein and
  Kreuzer}{2006}]{kehrein2006computing}
Kehrein, A., and Kreuzer, M.
\newblock 2006.
\newblock Computing border bases.
\newblock {\em Journal of Pure and Applied Algebra} 205(2):279--295.

\bibitem[\protect\citeauthoryear{Kera and Hasegawa}{2018}]{kera2018approximate}
Kera, H., and Hasegawa, Y.
\newblock 2018.
\newblock Approximate vanishing ideal via data knotting.
\newblock In {\em Proceedings of the Thirty-Second {AAAI} Conference on
  Artificial Intelligence (AAAI)},  3399--3406.
\newblock AAAI Press.

\bibitem[\protect\citeauthoryear{Kera and Hasegawa}{2019}]{kera2019spurious}
Kera, H., and Hasegawa, Y.
\newblock 2019.
\newblock Spurious vanishing problem in approximate vanishing ideal.
\newblock {\em arXiv preprint arXiv:1901.08798}.

\bibitem[\protect\citeauthoryear{Kera and Iba}{2016}]{kera2016vanishing}
Kera, H., and Iba, H.
\newblock 2016.
\newblock Vanishing ideal genetic programming.
\newblock In {\em Proceedings of the 2016 IEEE Congress on Evolutionary
  Computation (CEC)},  5018--5025.
\newblock IEEE.

\bibitem[\protect\citeauthoryear{Kir{\'a}ly, Kreuzer, and
  Theran}{2014}]{kiraly2014dual}
Kir{\'a}ly, F.~J.; Kreuzer, M.; and Theran, L.
\newblock 2014.
\newblock Dual-to-kernel learning with ideals.
\newblock {\em arXiv preprint arXiv:1402.0099}.

\bibitem[\protect\citeauthoryear{Li and Wang}{2017}]{li2017riemannian}
Li, Q., and Wang, Z.
\newblock 2017.
\newblock Riemannian submanifold tracking on low-rank algebraic variety.
\newblock In {\em the Thirty-First AAAI Conference on Artificial Intelligence},
   2196--2202.
\newblock AAAI Press.

\bibitem[\protect\citeauthoryear{Lichman}{2013}]{Lichman2013machine}
Lichman, M.
\newblock 2013.
\newblock {UCI} machine learning repository.

\bibitem[\protect\citeauthoryear{Limbeck}{2013}]{limbeck2014computation}
Limbeck, J.
\newblock 2013.
\newblock {\em Computation of approximate border bases and applications}.
\newblock Ph.D. Dissertation, Passau, Universit{\"a}t Passau.

\bibitem[\protect\citeauthoryear{Livni \bgroup et al\mbox.\egroup
  }{2013}]{livni2013vanishing}
Livni, R.; Lehavi, D.; Schein, S.; Nachliely, H.; Shalev-Shwartz, S.; and
  Globerson, A.
\newblock 2013.
\newblock Vanishing component analysis.
\newblock In {\em Proceedings of the Thirteenth International Conference on
  Machine Learning (ICML)},  597--605.
\newblock PMLR.

\bibitem[\protect\citeauthoryear{M{\"o}ller and
  Buchberger}{1982}]{moller1982construction}
M{\"o}ller, H.~M., and Buchberger, B.
\newblock 1982.
\newblock The construction of multivariate polynomials with preassigned zeros.
\newblock In {\em Computer Algebra. EUROCAM 1982. Lecture Notes in Computer
  Science},  24--31.
\newblock Springer Berlin Heidelberg.

\bibitem[\protect\citeauthoryear{Ongie \bgroup et al\mbox.\egroup
  }{2017}]{ongie2017algebraic}
Ongie, G.; Willett, R.; Nowak, R.~D.; and Balzano, L.
\newblock 2017.
\newblock Algebraic variety models for high-rank matrix completion.
\newblock In {\em Proceedings of the Thirth-Forth International Conference on
  Machine Learning (ICML)},  2691--2700.
\newblock PMLR.

\bibitem[\protect\citeauthoryear{Sauer}{2007}]{sauer2007approximate}
Sauer, T.
\newblock 2007.
\newblock Approximate varieties, approximate ideals and dimension reduction.
\newblock {\em Numerical Algorithms} 45(1):295--313.

\bibitem[\protect\citeauthoryear{{Vidal}, {Yi Ma}, and
  {Sastry}}{2005}]{vidal2005generalized}
{Vidal}, R.; {Yi Ma}; and {Sastry}, S.
\newblock 2005.
\newblock Generalized principal component analysis ({GPCA}).
\newblock {\em IEEE Transactions on Pattern Analysis and Machine Intelligence}
  27(12):1945--1959.

\bibitem[\protect\citeauthoryear{Wang and Ohtsuki}{2018}]{wang2018nonlinear}
Wang, L., and Ohtsuki, T.
\newblock 2018.
\newblock Nonlinear blind source separation unifying vanishing component
  analysis and temporal structure.
\newblock {\em IEEE Access} 6:42837--42850.

\bibitem[\protect\citeauthoryear{Zhao and Song}{2014}]{zhao2014hand}
Zhao, Y.-G., and Song, Z.
\newblock 2014.
\newblock Hand posture recognition using approximate vanishing ideal
  generators.
\newblock In {\em Proceedings of the 2014 IEEE International Conference on
  Image Processing (ICIP)},  1525--1529.
\newblock IEEE.

\end{thebibliography}

\end{document}


\maketitle

This supplementary material describes the details of our method and presents more results of the numerical experiments. Cross-referencing numbers here are prefixed with S (e.g., Eq.~(S1) or Fig.~S1). Numbers without the prefix (e.g., Eq.~(1) or Fig.~1) refer to numbers in the main text.

\section{Proof of Lemmas~1 and 2}
\begin{lem}\label{lem:poly-decomposition-by-derivatives}
Any polynomial $g\in\mathcal{P}_n$ of degree at least one can be represented as 
\begin{align*}
    g = \sum_{k=1}^n h_k\frac{\partial g}{\partial x_k} + r,
\end{align*}
where $h_k, r\in\mathcal{P}_n$ and $\mathrm{deg}(r) < \mathrm{deg}(g)$.
\end{lem}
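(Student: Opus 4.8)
The plan is to \emph{guess the multipliers explicitly} by taking $h_k = x_k/d$, where $d = \mathrm{deg}(g) \ge 1$, and then read off $r$ as the resulting remainder. The single tool driving everything is Euler's identity for homogeneous polynomials: if $f\in\mathcal{P}_n$ is homogeneous of degree $j$, then $\sum_{k=1}^n x_k\,(\partial f/\partial x_k) = j\,f$. Division by $d$ is legitimate precisely because the hypothesis $\mathrm{deg}(g)\ge 1$ guarantees $d\ge 1$.

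First I would split $g$ into its homogeneous components, writing $g = \sum_{j=0}^{d} g_j$ with each $g_j\in\mathcal{P}_n$ homogeneous of degree $j$ (possibly zero for $j<d$) and $g_d\neq 0$. Since differentiation is linear and lowers each component's degree by one, $\partial g/\partial x_k = \sum_{j=1}^{d}\partial g_j/\partial x_k$ (the constant part $g_0$ contributing nothing).

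Next I would substitute the guessed multipliers and apply Euler's identity componentwise:
\begin{align*}
\sum_{k=1}^n h_k\frac{\partial g}{\partial x_k}
= \frac{1}{d}\sum_{j=1}^{d}\sum_{k=1}^n x_k\frac{\partial g_j}{\partial x_k}
= \frac{1}{d}\sum_{j=1}^{d} j\,g_j
= \sum_{j=1}^{d}\frac{j}{d}\,g_j.
\end{align*}
Subtracting this from $g$ then defines the remainder
\begin{align*}
r = g - \sum_{k=1}^n h_k\frac{\partial g}{\partial x_k}
= g_0 + \sum_{j=1}^{d-1}\left(1-\frac{j}{d}\right)g_j,
\end{align*}
in which the degree-$d$ component cancels exactly (its coefficient is $1-\tfrac{d}{d}=0$), so that $\mathrm{deg}(r)\le d-1 < \mathrm{deg}(g)$, as required.

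The construction is essentially forced, so there is no serious obstacle; the only point needing care is the observation that the top homogeneous part cancels \emph{exactly} while the lower parts survive with nonzero coefficients $1-j/d$, which is what pins down $r$ and certifies its degree bound. If one wanted to avoid invoking Euler's identity as a black box, the mild additional work would be to verify it directly by differentiating a monomial $x^{\alpha}$ with $|\alpha|=j$ and summing $\sum_k \alpha_k = j$; I would relegate that to a one-line remark rather than belabor it.
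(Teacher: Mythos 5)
Your proof is correct, but it takes a genuinely different route from the paper's. The paper argues variable by variable: it sweeps through $x_1,\dots,x_n$ and, whenever the degree in $x_k$ of the current remainder equals $\mathrm{deg}(g)=t+1$, it extracts that part with the multiplier $h_k=x_k/(t+1)$ and recurses on the leftover; its stated conclusion is the per-variable bound $\mathrm{deg}_l(r)\le t$ for every $l$, and the same scheme is what the paper reuses for Lemma~S2, whose conclusion is precisely a per-variable degree bound. You instead split $g$ into homogeneous components and apply Euler's identity with the uniform multipliers $h_k=x_k/d$, cancelling the entire top homogeneous part in one stroke. Your route buys brevity and, more importantly, direct control of the \emph{total} degree, which is exactly what Lemma~S1 asserts: for instance, for $g=x_1^2x_2$ your argument gives $r=0$, whereas the paper's sweep sets $h_1=h_2=0$ (no single variable attains degree $t+1=3$) and leaves $r=g$; in general the per-variable bounds $\mathrm{deg}_l(r)\le t$ that the paper's construction establishes do not by themselves imply $\mathrm{deg}(r)\le t$, so on this point your argument is actually tighter than the paper's. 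What the paper's variable-wise scheme buys in exchange is the refinement aimed at Lemma~S2: your choice $h_k=x_k/d$ does not control degrees variable by variable (for $g=x_1^2+x_2$ you get $r=x_2/2$, so $\mathrm{deg}_2(r)=\mathrm{deg}_2(g)$), so the per-variable statement would still require an iterative, one-variable-at-a-time construction rather than a single application of Euler's identity.
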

\begin{proof}
We provide a constructive proof. For simplicity of notation, we use $t$ such that $t+1=\mathrm{deg}(g)$.
Let $\mathrm{deg}_k(g)$ be the degree of $g$ with respect to the $k$-th variable $x_k$. If $\mathrm{deg}_1(g) < t+1$, we set  $h_1=0$ and proceed to $k=2$. Otherwise, we rearrange $g$ according to the degree of $x_1$ as follows.
\begin{align*}
    g = x_1^{t+1}g_1^{(0)} + x_1^{t}g_1^{(1)} + \cdots + g_1^{(t+1)},
\end{align*}
where $g_{1}^{(\tau)}$ denotes an $(n-1)$-variate polynomial of degree at most $\tau$ that does not contain $x_1$, ($\tau=0,1,...,t+1$). Then,
\begin{align*}
    \frac{\partial g}{\partial x_1} = (t+1)x_1^{t}g_1^{(0)} + tx_1^{t-1}g_1^{(1)} + \cdots + 0.
\end{align*}
By setting $h_1 = x_1/(t+1)$, 
\begin{align*}
    g &= h_1\frac{\partial g}{\partial x_1} + \frac{r_1}{t+1},
\end{align*}
where $r_1 = x_1^{t}g_1^{(1)} + 2x_1^{t-1}g_1^{(2)} \cdots + (t+1)g_1^{(t+1)}$. Note that $\mathrm{deg}_1(r_1)\le t$ and $\mathrm{deg}_l(r_1)\le t+1$ for $l\ne 1$. Next, we perform the same procedures for $k=2$ and $r_1$; if $\mathrm{deg}_2(r_1)<t+1$ then set $h_2=0$ and $r_2 = r_1$, and proceeds to $k=3$; otherwise, rearrange $r_1$ according to the degree of $x_2$ as
\begin{align*}
    r_1 = x_2^{t+1}r_2^{(0)} + x_2^{t}r_2^{(1)} + \cdots + r_2^{(t+1)},
\end{align*}
where $r_{2}^{(\tau)}$ denotes an $(n-1)$-variate polynomial of  degree at most $\tau$ that does not contain $x_2$, ($\tau=0,1,...,t+1$). Again, setting $h_2=x_2/(t+1)$, we obtain
\begin{align*}
    g &= h_1\frac{\partial g}{\partial x_1} + h_2\frac{\partial g}{\partial x_2} + r_2,
\end{align*}
where $r_2 = x_2^{t}r_2^{(1)} + 2x_2^{t-1}r_2^{(2)} \cdots + (t+1)r_2^{(t+1)}$. Note that $\mathrm{deg}_1(r_2)\le t$, $\mathrm{deg}_2(r_2)\le t$ and $\mathrm{deg}_l(r_2)\le t+1$ for $l\ne 1,2$. Repeating this procedure until $k=n$, then $r:=r_n$ satisfies $\mathrm{deg}_l(r)\le t$ for all $l$.
\end{proof}

The extended version of Lemma~\ref{lem:poly-decomposition-by-derivatives} can be proven in a similar fashion.
\begin{lem}\label{lem:poly-decomposition-by-derivatives-2}
Any $g\in\mathcal{P}_n$ of degree at least one can be represented as \begin{align*}
    g = \sum_{k=1}^n h_k\frac{\partial g}{\partial x_k} + r,
\end{align*}
where $h_k, r\in\mathcal{P}_n$ and $\mathrm{deg}_k(r) < \mathrm{deg}_k(g)$ for $k=1,2,...,n$. Here, $\mathrm{deg}_k(\cdot)$ denotes the degree of a given polynomial with respect to the $k$-th variable $x_k$.
\end{lem}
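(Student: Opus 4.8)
The plan is to reproduce the variable-by-variable elimination of the proof of Lemma~\ref{lem:poly-decomposition-by-derivatives}, the only change being that the global pivot degree $t+1=\mathrm{deg}(g)$ is replaced by the per-variable pivot $d_k:=\mathrm{deg}_k(g)$. I would process $x_1,x_2,\dots,x_n$ in order, building a chain of residuals $r_0=g,r_1,\dots,r_n=:r$ in which the $k$-th stage removes the top power of $x_k$.

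At stage $k$ I would set $h_k=0$ and $r_k=r_{k-1}$ if $\mathrm{deg}_k(r_{k-1})<d_k$; otherwise I would rearrange $r_{k-1}=x_k^{d_k}p^{(0)}+x_k^{d_k-1}p^{(1)}+\cdots+p^{(d_k)}$ with each $p^{(j)}$ free of $x_k$, take $h_k=x_k/d_k$, and set $r_k=r_{k-1}-h_k\,\partial r_{k-1}/\partial x_k$. The algebra is verbatim that of Lemma~\ref{lem:poly-decomposition-by-derivatives}: the top term $x_k^{d_k}p^{(0)}$ cancels because $1-d_k/d_k=0$, so $\mathrm{deg}_k(r_k)\le d_k-1$. (When $\mathrm{deg}_k(g)=0$ the variable $x_k$ is absent from $g$ and the corresponding condition is vacuous.)

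The point that needs genuine care — and the only substantive difference from Lemma~\ref{lem:poly-decomposition-by-derivatives}, where merely the total degree is tracked — is that eliminating a later variable must not re-inflate the degree of a variable already treated; I expect this monotonicity bookkeeping to be the main obstacle. I would prove that for each fixed $l$ the sequence $\mathrm{deg}_l(r_m)$ is non-increasing in $m$: for $l\ne k$, $\partial r_{k-1}/\partial x_k$ has $x_l$-degree at most $\mathrm{deg}_l(r_{k-1})$, multiplication by $h_k=x_k/d_k$ leaves the $x_l$-degree unchanged, and subtraction cannot raise it, while for $l=k$ the stage strictly lowers it. This simultaneously justifies the case split above (it gives $\mathrm{deg}_k(r_{k-1})\le d_k$, so the top power is never larger than $x_k^{d_k}$) and yields the conclusion $\mathrm{deg}_k(r)=\mathrm{deg}_k(r_n)\le\mathrm{deg}_k(r_k)\le d_k-1<\mathrm{deg}_k(g)$ for every $k$. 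Assembling the residual identities by telescoping, exactly as in Lemma~\ref{lem:poly-decomposition-by-derivatives}, then gives $g=\sum_{k=1}^n h_k\,\partial g/\partial x_k+r$.
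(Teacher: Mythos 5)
Your monotonicity bookkeeping (that $\mathrm{deg}_l(r_m)$ is non-increasing in $m$) is correct, but it is not the real obstacle; the step you wave through as ``verbatim'' algebra is where the argument breaks. Stage $k$ of your construction yields the identity $r_{k-1} = h_k\,\partial r_{k-1}/\partial x_k + r_k$, so telescoping gives
\begin{align*}
    g \;=\; \sum_{k=1}^n h_k\,\frac{\partial r_{k-1}}{\partial x_k} \;+\; r_n,
\end{align*}
with the derivative of the \emph{residual} $r_{k-1}$, not of $g$, in the $k$-th summand. These differ: since $r_1 = g - h_1\,\partial g/\partial x_1$ and $h_1$ is free of $x_2$, one has $\partial r_1/\partial x_2 = \partial g/\partial x_2 - h_1\,\partial^2 g/\partial x_1\partial x_2$, so exchanging $\partial r_1/\partial x_2$ for $\partial g/\partial x_2$ introduces the error term $h_1h_2\,\partial^2 g/\partial x_1\partial x_2$, which in general has full degree in $x_1$ and cannot be pushed into $r$. (The paper's own write-up of Lemma~\ref{lem:poly-decomposition-by-derivatives} makes the same silent substitution, so ``exactly as in Lemma~\ref{lem:poly-decomposition-by-derivatives}'' does not rescue you; Lemma~\ref{lem:poly-decomposition-by-derivatives} itself is still true because it only tracks total degree and can be proved by applying Euler's identity to the leading homogeneous form of $g$, an argument that has no per-variable analogue.)

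Moreover, this gap is not patchable, because Lemma~\ref{lem:poly-decomposition-by-derivatives-2} as stated is false. Take $n=2$ and $g = x_1^2x_2^2 + x_1^2 + x_2^2$, so $\mathrm{deg}_1(g)=\mathrm{deg}_2(g)=2$ and the conclusion would force $r \in \mathrm{span}\{1,\,x_1,\,x_2,\,x_1x_2\}$. Here $\partial g/\partial x_1 = 2x_1(x_2^2+1)$ and $\partial g/\partial x_2 = 2x_2(x_1^2+1)$, whose common zeros in $\mathbb{C}^2$ are $(0,0)$ and the four points $(\pm i,\pm i)$. Any identity $g = h_1\,\partial g/\partial x_1 + h_2\,\partial g/\partial x_2 + r$ forces $r=g$ at these five points, i.e., $r(0,0)=0$ and $r(\pm i,\pm i)=-1$. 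Writing $r = a + bx_1 + cx_2 + dx_1x_2$, the conditions at $(i,i)$ and $(-i,-i)$ sum to $-2d=-2$, while those at $(i,-i)$ and $(-i,i)$ sum to $2d=-2$; contradiction. Running your own procedure on this $g$ exhibits the failure concretely: it outputs $h_1 = x_1/2$, $r_1 = x_2^2$, $h_2 = x_2/2$, $r_2 = 0$, yet $\tfrac{x_1}{2}\,\partial g/\partial x_1 + \tfrac{x_2}{2}\,\partial g/\partial x_2 = 2x_1^2x_2^2 + x_1^2 + x_2^2 \ne g$, the discrepancy being exactly the cross term $x_1^2x_2^2 = h_1h_2\,\partial^2 g/\partial x_1\partial x_2$ identified above. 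So no reordering or refined bookkeeping can save the induction; any correct statement must weaken the per-variable degree claim.
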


\subsection{Proof of Lemma~1}
\begin{proof}
The degree of $g$ and $\partial g/\partial x_k$ are $t+1$ and at most $t$, respectively. $g$ can be represented as
\begin{align*}
    g = \sum_{k=1}^n h_k\frac{\partial g}{\partial x_k} + r,
\end{align*}
where $h_k$ and $r$ are polynomials.
From Lemma~S1, $h_k$ can be selected so that the degree of $r$ is at most $t$.
By evaluating this for $X$, we obtain $r(X) = \boldsymbol{0}$. Since $G^t$ can generate any vanishing polynomial of degree at most $t$, $r \in \langle G^t \rangle$. Also, $\partial g/\partial x_k\in \langle G^t \rangle$ for $k=1,2,...,n$. From the absorption property of the ideal, $g \in \langle G^t\rangle$.
\end{proof}

\subsection{Proof of Lemma~2}
\begin{proof}
Since $f$ and $\partial f/\partial x_k$ are polynomials of degree $t+1$ and degree at most $t$, respectively, there are polynomials $h$ and $r$ such that
\begin{align*}
    f = \sum_{k=1}^n h_k\frac{\partial f}{\partial x_k} + r,
\end{align*}
where the degree of $h_k$ and $r$ are polynomials. From Lemma~S1, $h_k$ can be selected so that the degree of $r$ is at most $t$. By evaluating this for $X$, we obtain $r(X) = f(X)$. Since column space of $F^t(X)$ spans evaluation vectors of any polynomial of degree at most $t$, $r(X) \in \mathrm{span}(F^t(X))$. 
\end{proof}

\section{Proof of Proposition~1}
\begin{defn}[$(t,\alpha)$-degree-wise identical]\label{def:identical}
Let $k \ne 0$ and let $t$ be an integer. A polynomial $\widetilde{h}\in\mathcal{P}_n$ is $(t,\alpha)$-degree-wise identical to a polynomial $h\in\mathcal{P}_n$ if $h$ and $\widetilde{h}$ consist of the same terms up to scale, and any pair of the corresponding terms $m$ of $h$ and $\widetilde{m}$ of $\widetilde{h}$ satisfies $\widetilde{m} = {\alpha}^{t-\mathrm{deg}(\widetilde{m})}m$.
\end{defn}
For instance, $\widetilde{h} = x^2y+8y$ is $(3,2)$-degree-wise identical to $h = x^2y + 2y$. 

\begin{lem}\label{lem:lin-comb identical}
Consider sets of polynomials, $H = \{h_1,h_2,...,h_s\}\subset\mathcal{P}_n$ and $\widetilde{H} = \{\widetilde{h}_1,\widetilde{h}_2,...,\widetilde{h}_s\}\subset\mathcal{P}_n$, where $\widetilde{h}_i$ is $(t,\alpha)$-degree-wise identical to $h$ for $i=1,2,...,s$. Then, any nonzero vectors $\widetilde{\boldsymbol{w}},\boldsymbol{w}\in\mathbb{R}^s$ such that $\widetilde{\boldsymbol{w}}={\alpha}^{\tau}\boldsymbol{w}$ yields a polynomial $\widetilde{H}\boldsymbol{w}$ that are $(t+\tau,\alpha)$-degree-wise identical to $H\boldsymbol{w}$.
\end{lem}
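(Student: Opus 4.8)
The plan is to work in a common monomial basis and to track coefficients degree by degree, exploiting the single structural fact that makes the statement true: the degree-wise scaling factor $\alpha^{t-\mathrm{deg}(\cdot)}$ depends only on a monomial (through its degree) and not on which $h_i$ it came from, so it commutes with the formation of linear combinations. Since the displayed relation $\widetilde{\boldsymbol{w}}=\alpha^{\tau}\boldsymbol{w}$ is what links the two weight vectors, the polynomial in question is $\widetilde{H}\widetilde{\boldsymbol{w}}=\sum_i \widetilde{w}_i\widetilde{h}_i$, and I would show it is $(t+\tau,\alpha)$-degree-wise identical to $H\boldsymbol{w}=\sum_i w_i h_i$.

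First I would fix an enumeration $\mu_1,\mu_2,\dots$ of all monomials occurring in the polynomials of $H$ (equivalently of $\widetilde{H}$, since each $\widetilde{h}_i$ shares the monomials of $h_i$ up to scale), and set $d_j=\mathrm{deg}(\mu_j)$. Writing each $h_i=\sum_j c_{ij}\mu_j$, Definition~\ref{def:identical} gives at once $\widetilde{h}_i=\sum_j \alpha^{t-d_j}c_{ij}\mu_j$, with $\alpha\ne 0$ and with the monomial support unchanged.

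Next I would expand the two linear combinations in this basis and compare coefficients. The coefficient of $\mu_j$ in $H\boldsymbol{w}$ is $C_j:=\sum_i w_i c_{ij}$. Substituting $\widetilde{w}_i=\alpha^{\tau}w_i$ together with the coefficients of $\widetilde{h}_i$, the coefficient of $\mu_j$ in $\widetilde{H}\widetilde{\boldsymbol{w}}$ becomes $\sum_i \alpha^{\tau}w_i\,\alpha^{t-d_j}c_{ij}=\alpha^{(t+\tau)-d_j}C_j$, because the factor $\alpha^{(t+\tau)-d_j}$ is independent of $i$ and pulls out of the sum. Hence each term of $\widetilde{H}\widetilde{\boldsymbol{w}}$ equals $\alpha^{(t+\tau)-d_j}$ times the corresponding term of $H\boldsymbol{w}$, which is exactly the assertion that $\widetilde{H}\widetilde{\boldsymbol{w}}$ is $(t+\tau,\alpha)$-degree-wise identical to $H\boldsymbol{w}$.

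The only step requiring care, and the only place the hypothesis $\alpha\ne 0$ is genuinely used, is verifying that the two polynomials "consist of the same terms up to scale," i.e.\ that a monomial appears in one precisely when it appears in the other. Since $\alpha^{(t+\tau)-d_j}\ne 0$, the monomial $\mu_j$ survives in $\widetilde{H}\widetilde{\boldsymbol{w}}$ exactly when $C_j\ne 0$, that is, exactly when it survives in $H\boldsymbol{w}$; thus any cancellations happen in lockstep and the term-by-term correspondence is well defined. I do not anticipate a real obstacle here: the entire content is that degree-wise scaling is linear in the coefficients, so the argument reduces to the coefficient bookkeeping above.
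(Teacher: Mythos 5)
Your proof is correct and takes essentially the same route as the paper: the paper simply declares this lemma trivial from the definition of degree-wise identicality, and your monomial-basis coefficient bookkeeping (with the factor $\alpha^{(t+\tau)-d_j}$ pulling out of the sum over $i$, and $\alpha\neq 0$ ensuring cancellations happen in lockstep so the supports agree) is exactly that unwinding, spelled out in full.
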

\begin{proof}
The proof is trivial from Definition~\ref{def:identical}.
\end{proof}

\begin{lem}\label{lem:correspondence}
Let a polynomial $\widetilde{h}\in\mathcal{P}_n$ be $(t,\alpha)$-degree-wise identical to a polynomial $h\in\mathcal{P}_n$. Let $X\subset\mathbb{R}^n$ be a set of points. Then, $\widetilde{h}(\alpha X) = {\alpha}^{t}h(X)$ and $\nabla \widetilde{h}(\alpha X) = {\alpha}^{t-1}\nabla h(X)$, and thus, 
\begin{align*}
    \frac{\widetilde{h}}{\|\nabla\widetilde{h}(\alpha X)\|}(\alpha X) = \alpha \frac{h}{\|\nabla h(X)\|}(X). 
\end{align*}
\end{lem}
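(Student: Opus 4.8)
The plan is to unpack the definition of $(t,\alpha)$-degree-wise identical and directly compute how each of the three quantities transforms under the scaling $x \mapsto \alpha x$, then combine them. Let me think about what the definition gives me.

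We have $\widetilde h$ consisting of the same monomials as $h$ up to scale, with each term satisfying $\widetilde m = \alpha^{t-\deg(\widetilde m)} m$. Note $\deg(\widetilde m) = \deg(m)$ since they're the same monomial up to scale. So if $m = c\, x^{\beta}$ (a monomial of degree $d = |\beta|$), then $\widetilde m = \alpha^{t-d} c\, x^\beta$.

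First claim: $\widetilde h(\alpha X) = \alpha^t h(X)$. Evaluate a single term $\widetilde m$ at $\alpha x$: $\widetilde m(\alpha x) = \alpha^{t-d} c (\alpha x)^\beta = \alpha^{t-d} c \alpha^d x^\beta = \alpha^t c x^\beta = \alpha^t m(x)$. Summing over all terms gives $\widetilde h(\alpha x) = \alpha^t h(x)$. Good — the exponent $t-d$ in the definition is precisely engineered to cancel the degree-$d$ homogeneity of the monomial evaluation.

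Second claim: $\nabla \widetilde h(\alpha X) = \alpha^{t-1}\nabla h(X)$. Take $\partial/\partial x_k$ of a term: $\partial_k \widetilde m = \alpha^{t-d}\partial_k m$, and $\partial_k m$ is a monomial of degree $d-1$. Evaluating at $\alpha x$: $\partial_k \widetilde m(\alpha x) = \alpha^{t-d}(\partial_k m)(\alpha x) = \alpha^{t-d}\alpha^{d-1}(\partial_k m)(x) = \alpha^{t-1}\partial_k m(x)$. (A constant term contributes zero to both sides, consistently.) Summing over terms and over $k$ gives the gradient statement.

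Third claim follows by combining: $\|\nabla\widetilde h(\alpha X)\| = \alpha^{t-1}\|\nabla h(X)\|$ (assuming $\alpha > 0$, or more carefully $|\alpha|^{t-1}$; I would flag the sign convention, since the norm introduces an absolute value whereas $\widetilde h(\alpha X)$ carries $\alpha^t$ with sign), so
\[
\frac{\widetilde h(\alpha X)}{\|\nabla\widetilde h(\alpha X)\|} = \frac{\alpha^t h(X)}{\alpha^{t-1}\|\nabla h(X)\|} = \alpha\,\frac{h(X)}{\|\nabla h(X)\|}.
\]
The routine part is the monomial bookkeeping; the only genuine subtlety I anticipate is the $\alpha$ versus $|\alpha|$ issue in the norm, which I expect the paper either handles by assuming $\alpha > 0$ or by absorbing the sign — I would state that assumption explicitly rather than let it slide. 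Since $X$ is a finite point set, all identities are understood entrywise over the evaluation vectors, and linearity of evaluation, differentiation, and summation over terms is what lets me reduce everything to the single-monomial computation above.
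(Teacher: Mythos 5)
Your proof is correct and takes essentially the same route as the paper's: both reduce everything to a single monomial $\widetilde m = \alpha^{t-\deg(m)}m$ and use the homogeneity of monomials and their derivatives under $x\mapsto \alpha x$, then sum over terms. Your flag on the $\alpha$ versus $|\alpha|$ issue is legitimate and is actually more careful than the paper, whose proof stops after establishing $\nabla\widetilde m(\alpha X)=\alpha^{t-1}\nabla m(X)$ and never derives the displayed ratio identity, silently eliding that $\|\nabla\widetilde h(\alpha X)\| = |\alpha|^{t-1}\|\nabla h(X)\|$ makes the final equality hold with $\alpha$ (rather than $|\alpha|$) only under a sign assumption such as $\alpha>0$.
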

\begin{proof}
Let $m$ and $\widetilde{m}$ be any corresponding terms between $h$ and $\widetilde{h}$, which satisfy $\mathrm{deg}(\widetilde{m})=\mathrm{deg}(m)$ and $\widetilde{m} = {\alpha}^{t-\mathrm{deg}(\widetilde{m})}m$. Then,
\begin{align*}
\widetilde{m}(\alpha X) &= {\alpha}^{t-\mathrm{deg}(\widetilde{m})}m(\alpha X), \\ 
&= {\alpha}^{t-\mathrm{deg}(\widetilde{m})}{\alpha}^{\mathrm{deg}(m)}m(X),\\
&= {\alpha}^t m(X).
\end{align*}
Similarly, for any $k$, 
\begin{align*}
\frac{\partial\widetilde{m}}{\partial x_k}(\alpha X) 
&= {\alpha}^{t-\mathrm{deg}(\widetilde{m})}\frac{\partial m}{\partial x_k}(\alpha X), \\ 
&= {\alpha}^{t-\mathrm{deg}(\widetilde{m})} {\alpha}^{\mathrm{deg}(m)-1}\frac{\partial m}{\partial x_k}(X),\\
&= {\alpha}^{t-1} \frac{\partial m}{\partial x_k}(X),
\end{align*}
resulting in $\nabla \widetilde{m}(\alpha X) = {\alpha}^{t-1}\nabla m(X)$.
\end{proof}

\begin{lem}\label{lem:linear-response}
Suppose we perform SBC with the gradient-based normalization for $(S,\epsilon)$ and for $(\alpha X,|\alpha|\epsilon)$, ($k \ne 0$), and obtain $(F^{\tau},G^{\tau})$ and $(\widetilde{F}^{\tau},\widetilde{G}^{\tau})$, respectively, up to degree $t=\tau$.
Suppose that for $t \le \tau$, $F_{t}\cup G_{t}$ and $\widetilde{F}_{t}\cup \widetilde{G}_{t}$ have one-to-one correspondence, say, $h\in F_{t}\cup G_{t}$ and $\widetilde{h}\in\widetilde{F}_{t}\cup \widetilde{G}_{t}$, where $\widetilde{h}$ is degree-wise-$(1,\alpha)$ identical to $h$.
Then, the same claim holds for $t=\tau+1$.
\end{lem}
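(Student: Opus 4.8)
The plan is to prove the statement as the inductive step of the larger induction: I trace the degree-$(\tau+1)$ pass of SBC simultaneously on the two inputs $(X,\epsilon)$ and $(\alpha X,|\alpha|\epsilon)$ and show that every intermediate object produced for $\alpha X$ is degree-wise identical, with an appropriate parameter, to its counterpart for $X$. The degree-$(\tau+1)$ pass decomposes into three stages: (i) building the candidate polynomials from the lower-degree non-vanishing bases, (ii) removing their lower-degree evaluation content by projecting against $\langle F^{\tau}\rangle$, and (iii) solving the gradient-normalized (generalized) eigenvalue problem whose spectrum decides the vanishing/non-vanishing split and produces the new basis elements. I will show, stage by stage, that the scaled quantities equal the original ones up to a fixed power of $\alpha$, so that the eigenvectors coincide, the singular values scale by exactly $|\alpha|$, and the scaled tolerance $|\alpha|\epsilon$ reproduces the original classification.

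For stages (i) and (ii), the induction hypothesis gives that each degree-one and degree-$\tau$ non-vanishing basis element $\widetilde{f}\in\widetilde{F}_1$, $\widetilde{g}\in\widetilde{F}_{\tau}$ is $(1,\alpha)$-degree-wise identical to its counterpart $f\in F_1$, $g\in F_{\tau}$. Writing the identity in the scaling form $\widetilde{f}(x)=\alpha f(x/\alpha)$ (equivalently, the evaluation relation of Lemma~\ref{lem:correspondence}), a product $\widetilde{f}\widetilde{g}$ is $(2,\alpha)$-degree-wise identical to $fg$, so the pre-candidates for $\alpha X$ are $(2,\alpha)$-identical to those for $X$. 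The projection that strips the lower-degree part subtracts a linear combination of elements of $F^{\tau}$, whose coefficients solve a normal-equations system built from inner products of evaluation vectors; since by Lemma~\ref{lem:correspondence} each such evaluation vector scales by a fixed power of $\alpha$ under $X\mapsto\alpha X$, the scaled coefficients are exactly $\alpha$ times the original ones. Lemma~\ref{lem:lin-comb identical} then guarantees that the subtracted combination is again $(2,\alpha)$-identical, so the projected candidates $\widetilde{C}_{\tau+1}$ remain $(2,\alpha)$-degree-wise identical to $C_{\tau+1}$.

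Stage (iii) is the crux. Using Lemma~\ref{lem:correspondence} with parameter $2$, the candidate evaluation and gradient matrices scale as $\widetilde{C}_{\tau+1}(\alpha X)=\alpha^{2}C_{\tau+1}(X)$ and $\nabla\widetilde{C}_{\tau+1}(\alpha X)=\alpha\,\nabla C_{\tau+1}(X)$. Substituting these into the gradient-normalized generalized eigenvalue problem makes every Gram matrix a fixed power of $\alpha$ times its original: the evaluation Gram matrix scales by $\alpha^{4}$ and the gradient Gram matrix by $\alpha^{2}$, so the common factor cancels and the problem for $\alpha X$ has the same eigenvectors as for $X$, with eigenvalues multiplied by $\alpha^{2}$ and singular values multiplied by $|\alpha|$. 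Hence the comparison against the scaled tolerance $|\alpha|\epsilon$ reproduces exactly the original comparison against $\epsilon$, yielding a one-to-one correspondence between $\widetilde{F}_{\tau+1}\cup\widetilde{G}_{\tau+1}$ and $F_{\tau+1}\cup G_{\tau+1}$ with the same split into vanishing and non-vanishing parts. Finally, each new basis element is the eigenvector combination of the candidates divided by its gradient norm; since the eigenvectors coincide, Lemma~\ref{lem:lin-comb identical} keeps the unnormalized combination $(2,\alpha)$-identical, and dividing by the gradient norm, which by Lemma~\ref{lem:correspondence} scales by $|\alpha|$, lowers the parameter by one, giving a $(1,\alpha)$-degree-wise identical pair as required. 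I expect the main obstacle to be the bookkeeping of the exact power of $\alpha$ attached to each matrix in stage (iii) and the verification that it cancels precisely against the $|\alpha|$ in the tolerance; the sign of $\alpha$ must also be tracked through the gradient norm, but this only affects even versus odd powers and is absorbed by the $|\alpha|$ normalization.
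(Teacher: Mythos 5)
Your proposal is correct and follows essentially the same route as the paper's proof: products of $(1,\alpha)$-identical basis elements give $(2,\alpha)$-identical pre-candidates, the projection coefficients scale by $\alpha$ (your normal-equations argument is equivalent to the paper's pseudo-inverse computation in Lemma~\ref{lem:orthogonalization-keeps identicality}), the evaluation and gradient Gram matrices scale by $\alpha^4$ and $\alpha^2$ so eigenvalues scale by $\alpha^2$ with unchanged eigenvector directions, and the threshold $|\alpha|\epsilon$ reproduces the vanishing/non-vanishing split. The only cosmetic difference is that you fix the scale of the new basis elements by dividing by the gradient norm (tracking the $\mathrm{sign}(\alpha)$ ambiguity explicitly), whereas the paper absorbs this into the normalization constraint of the generalized eigenvalue problem via $\widetilde{V}=\alpha^{-1}V$; both resolve the same inherent eigenvector sign ambiguity.
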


\begin{proof}
For $t=\tau+1$, $C_{\tau+1}$ and $\widetilde{C}_{\tau+1}$ are generated from $(F_1,F_{\tau})$ and $(\widetilde{F}_1,\widetilde{F}_{\tau+1})$, respectively. From the one-to-one correspondence between $F_t$ and $\widetilde{F}_t$ for $t\le\tau$, There is also one-to-one correspondence between $C_{\tau+1}^{\mathrm{pre}}$ and $\widetilde{C}_{\tau+1}^{\mathrm{pre}}$.
Moreover, $\widetilde{c}^{\mathrm{pre}}\in \widetilde{C}_{\tau+1}^{\mathrm{pre}}$ is degree-wise-($2,k$) identical to the corresponding $c^{\mathrm{pre}}\in C_{\tau+1}^{\mathrm{pre}}$ due to the assumption. Suppose $c^{\mathrm{pre}}$ and $\widetilde{c}^{\mathrm{pre}}$ becomes $c \in C_{\tau+1}$ and $\widetilde{c}\in \widetilde{C}_{\tau+1}$ after the orthogonalization Eq.~(1). By Lemma~\ref{lem:orthogonalization-keeps identicality}, $\widetilde{c}$ is degree-wise-($2,k$) identical to $c$.

From Lemma~\ref{lem:correspondence}, $\widetilde{h}(\alpha X) = {\alpha}^{2}h(X)$ and $\nabla\widetilde{h}(\alpha X) = k\nabla h(X)$. Therefore, $\widetilde{C}_{\tau+1}(\alpha\widetilde{X}) = {\alpha}^{2}C_{\tau+1}(X)$ and $\nabla \widetilde{C}_{\tau}(\alpha\widetilde{X}) = k \nabla C_{\tau+1}(X)$. Note that $\mathfrak{n}_{\mathrm{g}}(\widetilde{C}_{\tau+1})(\alpha X) = {\alpha}^2\mathfrak{n}_{\mathrm{g}}(C_{\tau+1})(X)$.
Thus, the generalized eigenvalue problem Eq.~(2)
\begin{align*}
    \widetilde{C}_{\tau+1}(\alpha X)^{\top}\widetilde{C}_{\tau+1}(\alpha X)\widetilde{V}
    = \mathfrak{N}_{\mathrm{g}}(\widetilde{C}_{\tau+1};\alpha X)
    \widetilde{V}\widetilde{\Lambda},
\end{align*}
where $\mathfrak{N}_{\mathrm{g}}$ is $\mathfrak{N}$ for the gradient-based normalization, is equivalent to 
\begin{align*}
C_{\tau+1}(X)^{\top}C_{\tau+1}(X)\widetilde{V} = \frac{1}{{\alpha}^2}\mathfrak{N}_{\mathrm{g}}(C_{\tau+1};X)\widetilde{V}\widetilde{\Lambda},
\end{align*}
which leads to $\widetilde{\Lambda}={\alpha}^2 \Lambda$. Also, 
\begin{align*}
    \widetilde{V}^{\top}\widetilde{C}_{\tau+1}(\alpha X)^{\top}\widetilde{C}_{\tau+1}(\alpha X)\widetilde{V} &=I,\\
    {\alpha}^2 \widetilde{V}^{\top}C_{\tau+1}( S)^{\top}C_{\tau+1}(X)\widetilde{V} &=I.
\end{align*}
Comparing with $V^{\top}C_{\tau+1}(X)^{\top}C_{\tau+1}(X)V=I$, we obtain $\widetilde{V} = {\alpha}^{-1}V$.

Let $\boldsymbol{v}_i$ and $\widetilde{\boldsymbol{v}}_i$ be the $i$-th column of $V$ and $\widetilde{V}$, respectively. From $\widetilde{\boldsymbol{v}}_i={\alpha}^{-1}\boldsymbol{v}_i$ and Lemma~\ref{lem:lin-comb identical}, $\widetilde{C}_{1}\widetilde{\boldsymbol{v}}_i$ is $(1,\alpha)$-degree-wise identical to $C_{\tau+1}\boldsymbol{v}_i$. Hence, any polynomials $h\in F_{\tau+1}\cup G_{\tau+1}$ and $\widetilde{h}\in \widetilde{F}_{\tau+}\cup \widetilde{G}_{\tau+1}$ satisfy $\widetilde{h}(\alpha X) = \alpha h(X)$. 
This fact is also supported by $\widetilde{\Lambda}={\alpha}^2 \Lambda$ (recall that the square root of the eigenvalues corresponds to the extent of vanishing).
Note that polynomials in $\widetilde{C}_{\tau+1}\widetilde{V}$ are assorted into $\widetilde{F}_{\tau+1}$ or $\widetilde{G}_{\tau+1}$ by the threshold $k \epsilon$, which leads to the same classification as $F_{\tau+1}$ and $G_{\tau+1}$ by $\epsilon$. Thus, the one-to-one correspondence is kept between $F_{\tau+1}$ and $\widetilde{F}_{\tau+1}$ and also between $G_{\tau+1}$ and $\widetilde{G}_{\tau+1}$.
\end{proof}

\begin{lem}\label{lem:orthogonalization-keeps identicality}
Consider the same setting in Lemma~\ref{lem:correspondence}: suppose we perform SBC with the gradient-based normalization for $(S,\epsilon)$ and for $(\alpha X,|\alpha|\epsilon)$, ($k\ne 0$), and obtain $(F^{\tau},G^{\tau})$ and $(\widetilde{F}^{\tau},\widetilde{G}^{\tau})$, respectively, up to degree $t=\tau$.
Suppose that for $t \le \tau$, $F_{t}\cup G_{t}$ and $\widetilde{F}_{t}\cup \widetilde{G}_{t}$ have one-to-one correspondence, say, $h\in F_{t}\cup G_{t}$ and $\widetilde{h}\in\widetilde{F}_{t}\cup \widetilde{G}_{t}$, where $\widetilde{h}$ is degree-wise-$(1,\alpha)$ identical to $h$. Now, suppose $h^{\mathrm{pre}}\in C_{\tau+1}^{\mathrm{pre}}$ and $\widetilde{c}^{\mathrm{pre}}\in\widetilde{C}_{\tau+1}^{\mathrm{pre}}$ becomes $c \in C_{\tau+1}$ and $\widetilde{c}\in \widetilde{C}_{\tau+1}$ after the orthogonalization Eq.~(1). Then, $\widetilde{c}$ is $(2,\alpha)$-degree-wise identical to $c$.
\end{lem}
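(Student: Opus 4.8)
\subsection*{Proof proposal for Lemma~\ref{lem:orthogonalization-keeps identicality}}

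The plan is to make the orthogonalization Eq.~(1) explicit as a data-driven projection of the pre-candidate onto the span of the lower-degree basis, and then reduce the claim to the behaviour of that projection under the rescaling $X\mapsto\alpha X$. Writing Eq.~(1) as $c = c^{\mathrm{pre}} - F^{\tau}\boldsymbol{\beta}$, where $F^{\tau}$ is the ordered collection of basis polynomials of degree at most $\tau$ and $\boldsymbol{\beta}=F^{\tau}(X)^{\dagger}c^{\mathrm{pre}}(X)$ is the vector of projection (least-squares) coefficients, the tilde side reads $\widetilde{c}=\widetilde{c}^{\mathrm{pre}}-\widetilde{F}^{\tau}\widetilde{\boldsymbol{\beta}}$ with $\widetilde{\boldsymbol{\beta}}=\widetilde{F}^{\tau}(\alpha X)^{\dagger}\widetilde{c}^{\mathrm{pre}}(\alpha X)$. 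Since $\widetilde{c}^{\mathrm{pre}}$ is $(2,\alpha)$-degree-wise identical to $c^{\mathrm{pre}}$ by hypothesis, and degree-wise identicality is closed under termwise subtraction (immediate from Definition~\ref{def:identical}), it suffices to prove that the subtracted term $\widetilde{F}^{\tau}\widetilde{\boldsymbol{\beta}}$ is $(2,\alpha)$-degree-wise identical to $F^{\tau}\boldsymbol{\beta}$.

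First I would rescale the two data-dependent factors via Lemma~\ref{lem:correspondence}. By the induction hypothesis every column $g\in F^{\tau}$ has a partner $\widetilde{g}$ that is $(1,\alpha)$-degree-wise identical to it, so $\widetilde{g}(\alpha X)=\alpha\,g(X)$ and therefore $\widetilde{F}^{\tau}(\alpha X)=\alpha\,F^{\tau}(X)$; likewise $\widetilde{c}^{\mathrm{pre}}(\alpha X)=\alpha^{2}c^{\mathrm{pre}}(X)$. Using the pseudoinverse identity $(\alpha M)^{\dagger}=\alpha^{-1}M^{\dagger}$, valid for any $\alpha\neq 0$ and any matrix $M$, the coefficients rescale cleanly:
\begin{align*}
\widetilde{\boldsymbol{\beta}}=\widetilde{F}^{\tau}(\alpha X)^{\dagger}\widetilde{c}^{\mathrm{pre}}(\alpha X)=\alpha^{-1}F^{\tau}(X)^{\dagger}\cdot\alpha^{2}c^{\mathrm{pre}}(X)=\alpha\,\boldsymbol{\beta}.
\end{align*}
Hence the tilde-side weight vector is exactly $\alpha^{1}$ times the original one.

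The conclusion then follows from Lemma~\ref{lem:lin-comb identical} applied with $t=1$ and exponent $\tau=1$: the columns of $\widetilde{F}^{\tau}$ are $(1,\alpha)$-degree-wise identical to those of $F^{\tau}$, and $\widetilde{\boldsymbol{\beta}}=\alpha\boldsymbol{\beta}$, so $\widetilde{F}^{\tau}\widetilde{\boldsymbol{\beta}}$ is $(1+1,\alpha)=(2,\alpha)$-degree-wise identical to $F^{\tau}\boldsymbol{\beta}$. Combining this with the $(2,\alpha)$-identicality of $\widetilde{c}^{\mathrm{pre}}$ and the subtraction-closure noted above gives that $\widetilde{c}=\widetilde{c}^{\mathrm{pre}}-\widetilde{F}^{\tau}\widetilde{\boldsymbol{\beta}}$ is $(2,\alpha)$-degree-wise identical to $c=c^{\mathrm{pre}}-F^{\tau}\boldsymbol{\beta}$, as required.

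The main obstacle I anticipate is not the degree bookkeeping---which is entirely delegated to Lemmas~\ref{lem:correspondence} and \ref{lem:lin-comb identical} once $\widetilde{\boldsymbol{\beta}}=\alpha\boldsymbol{\beta}$ is in hand---but rather pinning down the exact linear-algebraic form of Eq.~(1) and confirming that its coefficient map rescales by $\alpha^{-1}$ even when $F^{\tau}(X)$ is rank-deficient, so that one must appeal to the Moore--Penrose identity $(\alpha M)^{\dagger}=\alpha^{-1}M^{\dagger}$ rather than to an ordinary inverse. A minor point still to verify is that the projection in Eq.~(1) involves only basis polynomials of degree at most $\tau$, every one of which is covered by the induction hypothesis and thus possesses a $(1,\alpha)$-degree-wise identical partner.
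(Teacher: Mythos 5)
Your proposal is correct, and its skeleton is the same as the paper's: write Eq.~(1) as $c = c^{\mathrm{pre}} - F^{\tau}F^{\tau}(X)^{\dagger}c^{\mathrm{pre}}(X)$, show that the coefficient vector on the tilde side equals $\alpha$ times the original one, and conclude via Lemma~\ref{lem:lin-comb identical} together with the (easily checked) closure of degree-wise identicality under subtraction. The one genuine difference is how the key identity $\widetilde{F}^{\tau}(\alpha X)^{\dagger} = \alpha^{-1}F^{\tau}(X)^{\dagger}$ is justified. The paper argues structurally: by the SBC construction the columns of $\widetilde{F}^{\tau}(\alpha X)$ are mutually orthogonal, so the Gram matrix $\widetilde{D}=\widetilde{F}^{\tau}(\alpha X)^{\top}\widetilde{F}^{\tau}(\alpha X)$ is diagonal with positive entries, the pseudoinverse has the explicit form $\widetilde{D}^{-1}\widetilde{F}^{\tau}(\alpha X)^{\top}$, and the scaling is read off from $\widetilde{D}=\alpha^{2}D$. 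You instead observe that the induction hypothesis and Lemma~\ref{lem:correspondence} give the matrix identity $\widetilde{F}^{\tau}(\alpha X)=\alpha F^{\tau}(X)$ and then invoke the general Moore--Penrose identity $(\alpha M)^{\dagger}=\alpha^{-1}M^{\dagger}$. Your route is shorter and strictly more robust: it needs no orthogonality or full-column-rank property of $F^{\tau}(X)$, so it would survive even in degenerate situations where the evaluation matrix were rank-deficient (in SBC it never is, which is exactly why the paper's argument is also valid). What the paper's version buys in exchange is an explicit diagonal form of the pseudoinverse that makes the scaling computation self-contained, without appealing to the scalar-multiplication property of the pseudoinverse as an external fact. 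One cosmetic caveat: you describe the $(2,\alpha)$-identicality of $\widetilde{c}^{\mathrm{pre}}$ and $c^{\mathrm{pre}}$ as holding ``by hypothesis,'' whereas strictly it is derived from the hypothesis (pre-candidates are products of an $F_{1}$-element and an $F_{\tau}$-element, each pair being $(1,\alpha)$-identical); the paper makes the same move in the proof of Lemma~\ref{lem:linear-response}, so nothing is lost.
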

\begin{proof}
The entry-wise description of the orthogonalization Eq.~(1) for $C_{\tau+1}$ and $\widetilde{C}_{\tau+1}$ is respectively as follows.
\begin{align*}
    c &= c^{\mathrm{pre}} - F^{\tau}F^{\tau}(X)^{\dagger}c^{\mathrm{pre}}(X),\\
    \widetilde{c} &= \widetilde{c}^{\mathrm{pre}} - \widetilde{F}^{\tau}\widetilde{F}^{\tau}(\alpha X)^{\dagger}\widetilde{c}^{\mathrm{pre}}(\alpha X).
\end{align*}
Let $\widetilde{\boldsymbol{w}}=\widetilde{F}^{\tau}(\alpha X)^{\dagger}\widetilde{c}^{\mathrm{pre}}(\alpha X)$ and $\boldsymbol{w}=F^{\tau}(X)^{\dagger}c^{\mathrm{pre}}(X)$. We will now show $\widetilde{\boldsymbol{w}} = k \boldsymbol{w}$. If this holds, each entry of $\widetilde{F}^{\tau}\widetilde{\boldsymbol{w}}$ becomes degree-wise-($2,k$) identical to the corresponding entry of $F^{\tau}\boldsymbol{w}$. Thus, from Lemma~\ref{lem:lin-comb identical}, $\widetilde{c}$ is $(2,\alpha)$-degree-wise identical to $c$.

First, note that the column vectors of $\widetilde{F}^{\tau}(\alpha X)$ are mutually orthogonal by construction because the orthogonalization makes $\mathrm{span}(\widetilde{F}_{t_1}(\alpha X))$ and $\mathrm{span}(\widetilde{F}_{t_2}(\alpha X))$ mutually orthogonal for any $t_1\ne t_2$, and the generalized eigenvalue decomposition makes the columns of $\widetilde{F}_{t}(\alpha X)$ mutually orthogonal for $t\le\tau$. Therefore, 
\begin{align*}
    \widetilde{D} &:= \widetilde{F}^{\tau}(\alpha X)^{\top}\widetilde{F}^{\tau}(\alpha X),\\
    &={\alpha}^{2}F^{\tau}(X)^{\top}F^{\tau}(X),\\
    &=: {\alpha}^{2} D,
\end{align*}
where both $\widetilde{D}$ and $D$ are diagonal matrices with positive entries in their diagonal.
Hence, the pseudo-inverse becomes
\begin{align*}
\widetilde{F}^{\tau}(\alpha X)^{\dagger}&=\widetilde{D}^{-1}\widetilde{F}^{\tau}(\alpha X)^{\top},\\
&= ({\alpha}^{-2}D^{-1})(\alpha F^{\tau}(X)^{\top}),\\
&= {\alpha}^{-1} D^{-1}F^{\tau}(X)^{\top},\\ 
&= {\alpha}^{-1} F^{\tau}(X)^{\dagger}.  
\end{align*}
 Therefore, 
\begin{align*}
\widetilde{\boldsymbol{w}}&=\widetilde{F}^{\tau}(\alpha X)^{\dagger}\widetilde{c}^{\mathrm{pre}}(\alpha X),\\
&= {\alpha}^{-1}F^{\tau}(X)^{\dagger}({\alpha}^2 c^{\mathrm{pre}}(X)),\\
&= \alpha F^{\tau}(X)^{\dagger}c^{\mathrm{pre}}(X),\\
&= \alpha\boldsymbol{w}.
\end{align*}
\end{proof}

\section{Proof of the sufficiency of Conjecture~1}
\begin{proof}
From the assumption $g\in \langle G^{\mathrm{deg}(g)-1}\rangle$, we can represent $g$ as $g=\sum_{g^{\prime}\in G^{\mathrm{deg}(g)-1}}g^{\prime}h_{g^{\prime}}$, for some $\{h_{g^{\prime}}\}\subset \mathcal{P}_n$. Thus, 
\begin{align*}
    \nabla g(\boldsymbol{x}) &= \sum_{g^{\prime}\in G^{\mathrm{deg}(g)-1}} h_{g^{\prime}}(\boldsymbol{x})\nabla g^{\prime}(\boldsymbol{x})
    +g^{\prime}(\boldsymbol{x})\nabla h_{g^{\prime}}(\boldsymbol{x}),\\
    &= \sum_{g^{\prime}\in G^{\mathrm{deg}(g)-1}} h_{g^{\prime}}(\boldsymbol{x})\nabla g^{\prime}(\boldsymbol{x})\label{eq:grad-expansion},
\end{align*}
where we used $g^{\prime}(\boldsymbol{x})=0$ in the last equality. 
\end{proof}

\section{Proof of Proposition~2}
\begin{proof}
Equation~(8) shows that the evaluation of a partial derivative $(\partial h/\partial x_k)(\boldsymbol{x})$ is the sum of $|C_t^{\mathrm{pre}}|+|F^{t-1}|$ terms (note $|C_t|=|C_t^{\mathrm{pre}}|$). Hence, $\nabla h(\boldsymbol{x})$ requires $O(n(|C_t|+|F^{t-1}|))$.
Note that for an final output $F$ of SBC $|F|\le |X|$ holds. This is because $F(X)$ is full-rank thanks to the orthogonalization in Eq.~(1), and $\mathrm{rank}(F^t(X)) \le |X|$ because $\mathrm{span}(F(X))\subseteq \mathrm{R}^{|X|}$ (the equalities hold at $\epsilon=0$), where $\mathrm{rank}(\cdot)$ denotes the matrix rank of given matrix.  
Hence, $O(|F^{t-1}|)=O(|X|)$.
Also, by its construction, $|F_1|\le \mathrm{rank}(X)$.
Therefore, $|C_t|=|F_1||F_{t-1}|\le \mathrm{rank}(X)|X|$, and thus, $O(|C_t|)=O(\mathrm{rank}(X)|X|)$. Thus, $O(n(|C_t|+|F^{t-1}|))=O(n|C_t|)=O(n\cdot \mathrm{rank}(X)|X|)$.
\end{proof}

\section{Proof of Theorem~1}\label{ssec:basis-is-basis}
\paragraph{Structure and notations} The proof relies on Theorem~1 in \cite{kera2019spurious}, which shows that SBC with $\mathfrak{N}(C_t)=I$ satisfies Theorem~1. The structure of the proof here is following the proof of Theorem~2 in \cite{kera2019spurious}, which compares two processes of SBC, SBC with $\mathfrak{N}(C_t)=I$ and SBC with $\mathfrak{n}$ that satisfies Definition~4. We refer to the former and the latter, respectively, as SBC$_I$ and SBC$_{\mathfrak{n}}$.
If we use symbols such as $G_t$ and $F_t$ in SBC$_I$, we put tildes on the corresponding symbols such as $\widetilde{G}_t$ and $\widetilde{F}_t$ in SBC$_{\mathfrak{n}}$.

\begin{proof}
We prove the claim by induction with respect to degree $t$. 
Let $F_t$ and $G_t$ be the basis sets of nonvanishing polynomials and vanishing polynomials, respectively, obtained at degree $t$ iteration in SBC$_I$.
From Theorem~1 in~\cite{kera2019spurious}, we know that collecting $F_t$ and $G_t$ yields complete basis sets for both nonvanishing and vanishing polynomials. Here, we prove the claim by comparing $\widetilde{F}_t$ and $\widetilde{G}_t$ with $F_t$ and $G_t$. Specifically, we show $\mathrm{span}(\widetilde{F}_t)=\mathrm{span}(F_t)$ and $\langle \widetilde{G}^t\rangle = \langle G^t\rangle$. 
Note that $\mathrm{span}(\widetilde{F}_t)\subset\mathrm{span}(F_t)$ and $\langle \widetilde{G}^t\rangle \subset \langle G^t\rangle$ are obvious because $\widetilde{F}_t$ and $\widetilde{G}_t$ are generated by assigning additional constraints of normalization on the generation of $F_t$ and $G_t$.
Thus, our goal is to prove the reverse inclusion 
$\mathrm{span}(\widetilde{F}_t)\supset\mathrm{span}(F_t)$ and $\langle \widetilde{G}^t\rangle \supset \langle G^t\rangle$.

At $t=1$, it is obvious that $\mathrm{span}(F_1)=\mathrm{span}(\widetilde{F}_1)$ and $\langle G^1\rangle=\langle\widetilde{G}^1\rangle$. We assume $\mathrm{span}(F_t) = \mathrm{span}(\widetilde{F}_t)$ and $\langle G^t\rangle = \langle \widetilde{G}^t\rangle$ for all $t\le\tau$. Then, we can show $\mathrm{span}(C_{\tau+1}^{\mathrm{pre}})=\mathrm{span}(\widetilde{C}_{\tau+1}^{\mathrm{pre}})$ and $\mathrm{span}(C_{\tau+1})=\mathrm{span}(\widetilde{C}_{\tau+1})$. In fact, it is $pq\in \mathrm{span}(\widetilde{C}_{\tau+1}^{\mathrm{pre}})$ for any $pq\in C_{\tau+1}^{\mathrm{pre}}$, where $p\in F_1$ and $q\in F_{\tau}$, because $p\in \mathrm{span}(\widetilde{F}_1)$ and $q\in\mathrm{span}(\widetilde{F}_{\tau})$, and vice versa. The orthogonalization Eq.~(1) projects $\mathrm{span}(C_{\tau+1}^{\mathrm{pre}})$ to subspace $\mathrm{span}(C_{\tau+1})$, which are orthogonal to $\mathrm{span}(F^{\tau})$ in terms of the evaluation at points, i.e., $\mathrm{span}(C_{\tau+1}(X)) \perp \mathrm{span}(F^{\tau}(X))$. From $\mathrm{span}(C_{\tau+1}^{\mathrm{pre}})=\mathrm{span}(\widetilde{C}_{\tau+1}^{\mathrm{pre}})$ and $\mathrm{span}(F^{\tau}(X))=\mathrm{span}(\widetilde{F}^{\tau}(X))$, the orthogonalization  projects $C_{\tau+1}^{\mathrm{pre}}$ and $\widetilde{C}_{\tau+1}^{\mathrm{pre}}$ into the same subspace, i.e., $\mathrm{span}(C_{\tau+1})=\mathrm{span}(\widetilde{C}_{\tau+1})$.

Next, we show $\mathrm{span}(F_{\tau+1})=\mathrm{span}(\widetilde{F}_{\tau+1})$ by showing $\mathrm{span}(F_{\tau+1})\subset \mathrm{span}(\widetilde{F}_{\tau+1})$.
 Let $f\in\mathrm{span}(F_{\tau+1})$ be an nonvanishing polynomial. As shown above, $f\in\mathrm{span}(C_{\tau+1}) = \mathrm{span}(\widetilde{C}_{\tau+1})$. By construction, $f\in\mathrm{span}(\widetilde{F}_{\tau+1})$ unless $\mathfrak{n}(f)=\boldsymbol{0}$. On the other hand if $\mathfrak{n}(f)=\boldsymbol{0}$, then $f$ need not be included in basis sets because of the third requirement of $\mathfrak{n}$. 
 Therefore, $\mathrm{span}(F_{\tau+1})=\mathrm{span}(\widetilde{F}_{\tau+1})$.

We show $\langle G^{\tau+1}\rangle = \langle \widetilde{G}^{\tau+1}\rangle$ in a similar way. Let $g\in \mathrm{span}(G_{\tau+1})$ be a vanishing polynomial. Then, $g\in \mathrm{span}(\widetilde{G}_{\tau+1})$ unless $\mathfrak{n}(g) = \boldsymbol{0}$. From the third requirement for $\mathfrak{n}$, if $\mathfrak{n}(g)=0$ implies $g$ need not be included in $G_{\tau+1}$. Therefore, $\langle \widetilde{G}^t\rangle \supset \langle G^t\rangle$ and thus $\langle \widetilde{G}^t\rangle = \langle G^t\rangle$.
\end{proof}

\section{Basis reduction for the basis construction that is not normalized with $\mathfrak{n}_{\mathrm{g}}$}
Let $\mathfrak{n}$ be a valid normalization mappting for SBC and let $G_t$ be the basis set of vanishing polynomials obtained by SBC-$\mathfrak{n}$ at degree $t$. When $\mathfrak{n}=\mathfrak{n}_{\mathrm{g}}$, $\mathfrak{N}(G_t)$ is a full-rank matrix since the its column vectors are unit and mutually orthogonal. On the  other hand, when $\mathfrak{n}\ne \mathfrak{n}_{\mathrm{g}}$, $\mathfrak{N}(G_t)$ may not be a full-rank matrix. In this case, we need an additional procedure in our basis reduction as follows: (i) compute the rank of $\mathfrak{N}(G_t)$ and (ii) remove $|G_t| - \mathrm{rank}(\mathfrak{N}(G_t))$ polynomials from $G_t$ according to the extent of vanishing in ascending order (polynomials with the smallest extent of vanishing is removed first).

\section{Proof of Theorem~2}
\begin{proof}
It is trivial that the first two requirements are satisfied. As for the third requirement, Lemmas~1 and 2 state that we do not need to use polynomials with zero gradient norm for the basis sets of both vanishing polynomials and nonvanishing polynomials. Note that Lemma~2 implies that a nonvanishing polynomial $f$ such that $\mathfrak{n}_{\mathrm{g}}(f)$ are not included in $\mathrm{span}(C_{t+1})$ because the orthogonalization Eq.~(1) forces $\mathrm{span}(F^{t})(X)\perp \mathrm{span}(C_{t+1})(X)$ regardless of the use of $\mathfrak{n}_{\mathrm{g}}$. 
\end{proof}

\section{Detail of experiments}
In the first experiment, $\epsilon$ is selected for a basis construction algorithm $\mathcal{A}$ and a variety as follows.  First, we compute the Gr\"obner basis of the variety with the degree-reverse-lexicographic order, which determines the number $N$ of the lowest degree $d_{\mathrm{min}}$ of basis polynomials. We added $M$ dummy variables to our datasets (five for D$_1$ and nine for D$_2$) and thus $M$ approximate vanishing polynomials of degree 1 should be obtained, too. Using a linear search, we estimate the range $(\epsilon_{1},\epsilon_{2})$ of $\epsilon$ with which $\mathcal{A}$ outputs a basis set of vanishing polynomials such that (i) $M$ linear polynomials are contained, (ii) at least $N$ polynomials of degree $d_{\mathrm{min}}$ are contained, and (iii) no nonlinear polynomials of degree less than $d_{\mathrm{min}}$ are contained. Finally, $\epsilon$ is set to $\epsilon=(\epsilon_1+\epsilon_2)/2$. The condition (ii) does not require "exactly" $N$ but "at least" $N$. This is because VCA and SBC do not calculate the Gr\"obner basis and thus there can be more than $N$ polynomials of degree $d_{\mathrm{m}}$ for several reasons (especially, the spurious vanishing problem and redundancy of the basis set). We set the threshold of the basis reduction to 1e-9.

\subsection{Additional result}
We provide addition results of the first experiment in Figures~\ref{fig:redundancy-4points},~\ref{fig:redundancy-4points-noisy},~\ref{fig:redundancy-6points}, and~\ref{fig:redundancy-6points-noisy}. Basis sets obtained by VCA, SBC-$\mathfrak{n}_{\mathrm{g}}$, and SBC-$\mathfrak{n}_{\mathrm{c}}$ are compared. The basis reduction method is also applied. The Fig.~\ref{fig:redundancy-4points} is the full-version of Fig.~1. It can be seen that the proposed basis reduction method allow us to remove the redundant basis polynomials.
Next, we perturbed the four points by additive Gaussian noise $\mathcal{N}(0,0.05)$, and the obtained basis sets with $\epsilon=0.05$ are shown in Fig.~\ref{fig:redundancy-4points-noisy}. in a noisy case, we only can discuss redundant polynomials as those \textit{approximately} generated by other lower-degree polynomials. Thus, it is no longer meaningful to compute the exact Gr\"obner basis to find the \textit{exact} redundant basis polynomials. A few polynomials of obtained basis sets are suggested as redundant by our method, which seems reasonable according to the noise-free case. The third basis polynomial in VCA basis set is drawn in clutter lines, which do not go through the points. This is because this polynomial is close to the zero function; in fact, the coefficient norm of this polynomial is 1.2e-16. Similar results are obtained for another set of points both in a noise-free case Fig.~\ref{fig:redundancy-6points} and noisy case Fig.~\ref{fig:redundancy-6points-noisy}.

\begin{figure*}
\includegraphics[scale=0.4]{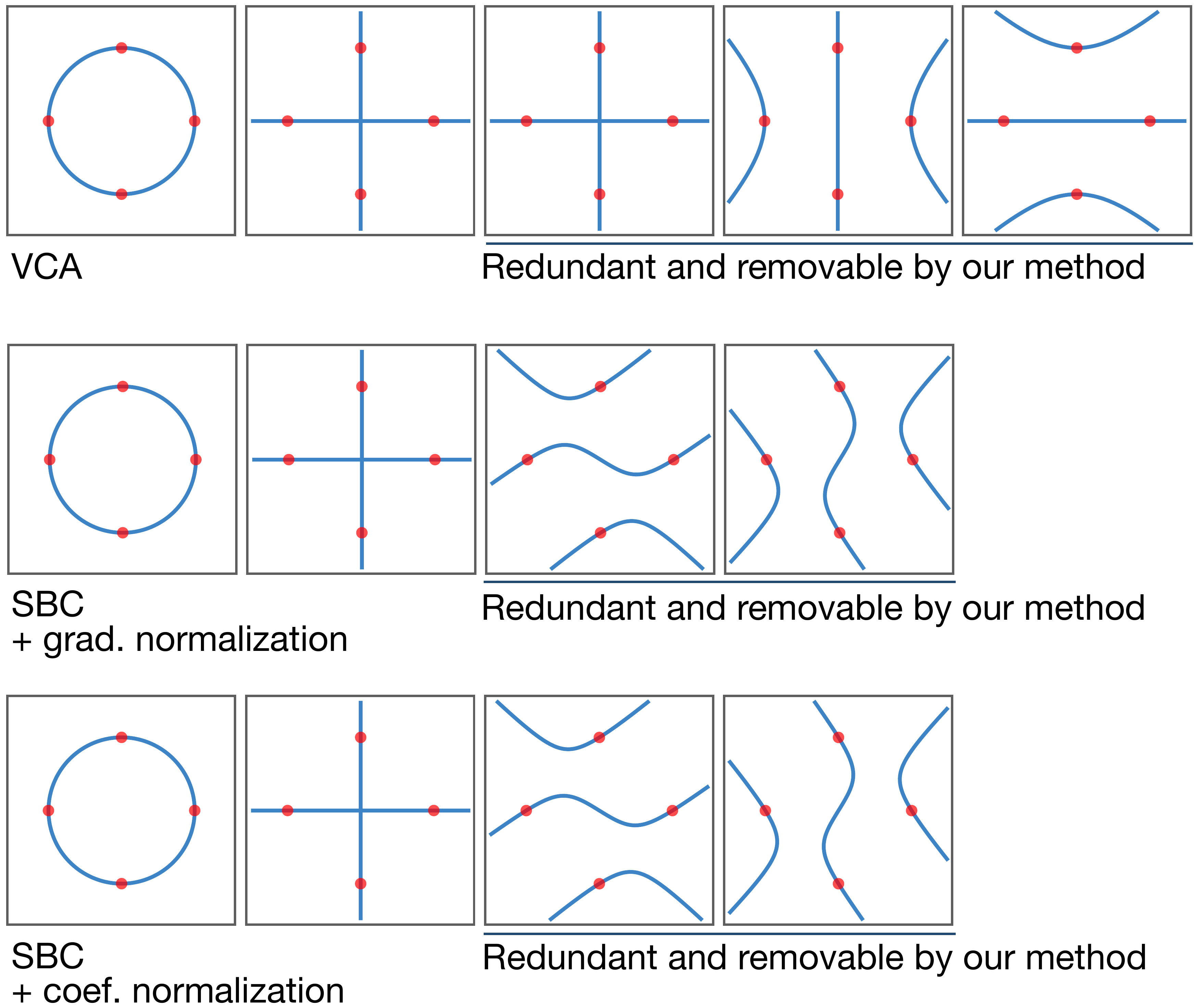}
\caption{Sets of vanishing polynomials obtained by VCA (top row), by SBC-$\mathfrak{n}_{\mathrm{g}}$ (middle row), and by SBC-$\mathfrak{n}_{\mathrm{c}}$ (bottom row) for four-point dataset in a noise-free case. All sets contain redundant basis polynomials, which can be efficiently removed by the basis reduction.}\label{fig:redundancy-4points}
\end{figure*}

\begin{figure*}
\includegraphics[scale=0.4]{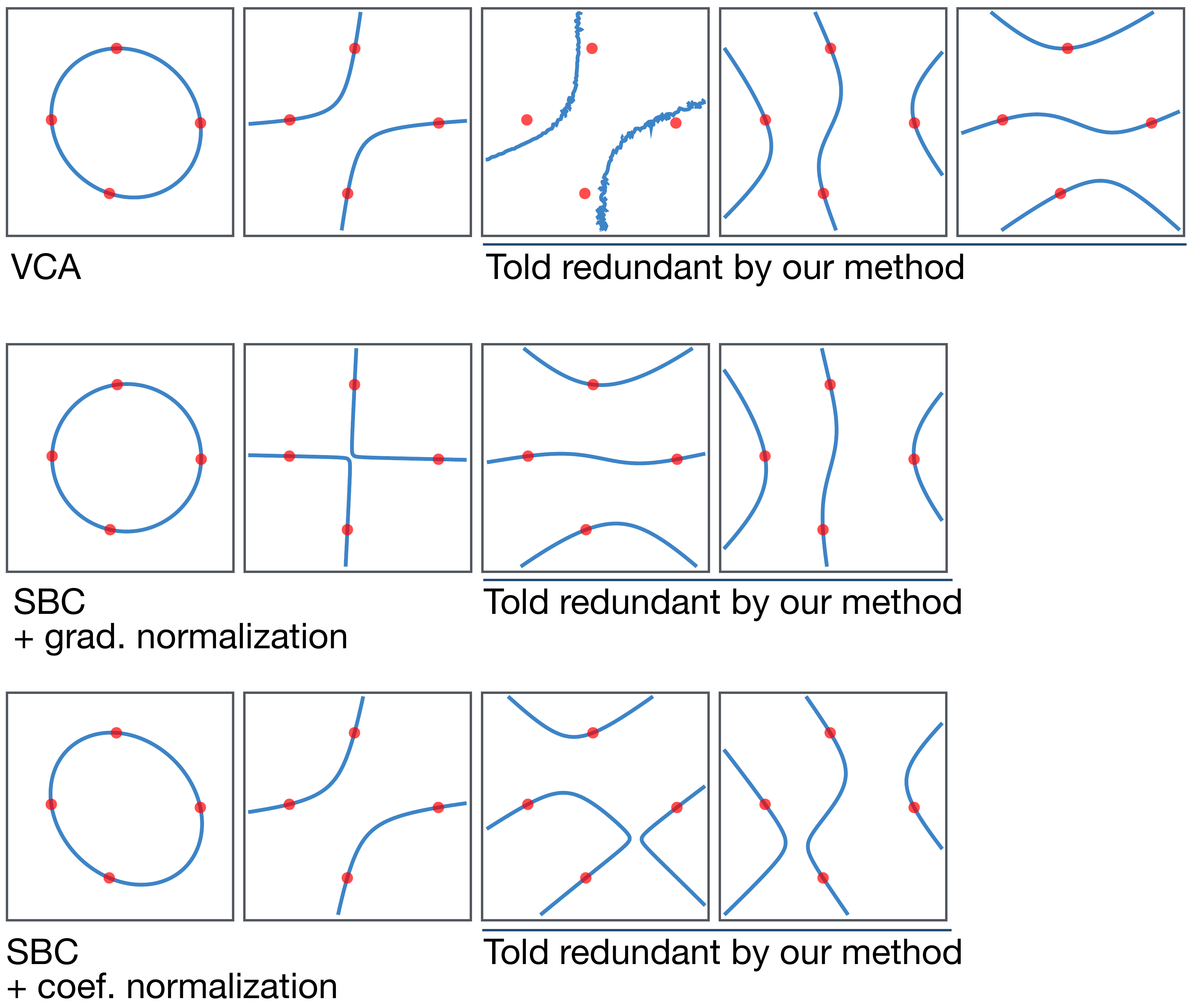}
\caption{Sets of vanishing polynomials obtained by VCA (top row), by SBC-$\mathfrak{n}_{\mathrm{g}}$ (middle row), and by SBC-$\mathfrak{n}_{\mathrm{c}}$ (bottom row) for four-point dataset in a noisy case. All sets contain polynomials that are suggested as redundant by the basis reduction.}\label{fig:redundancy-4points-noisy}
\end{figure*}

\begin{figure*}
\includegraphics[scale=0.28]{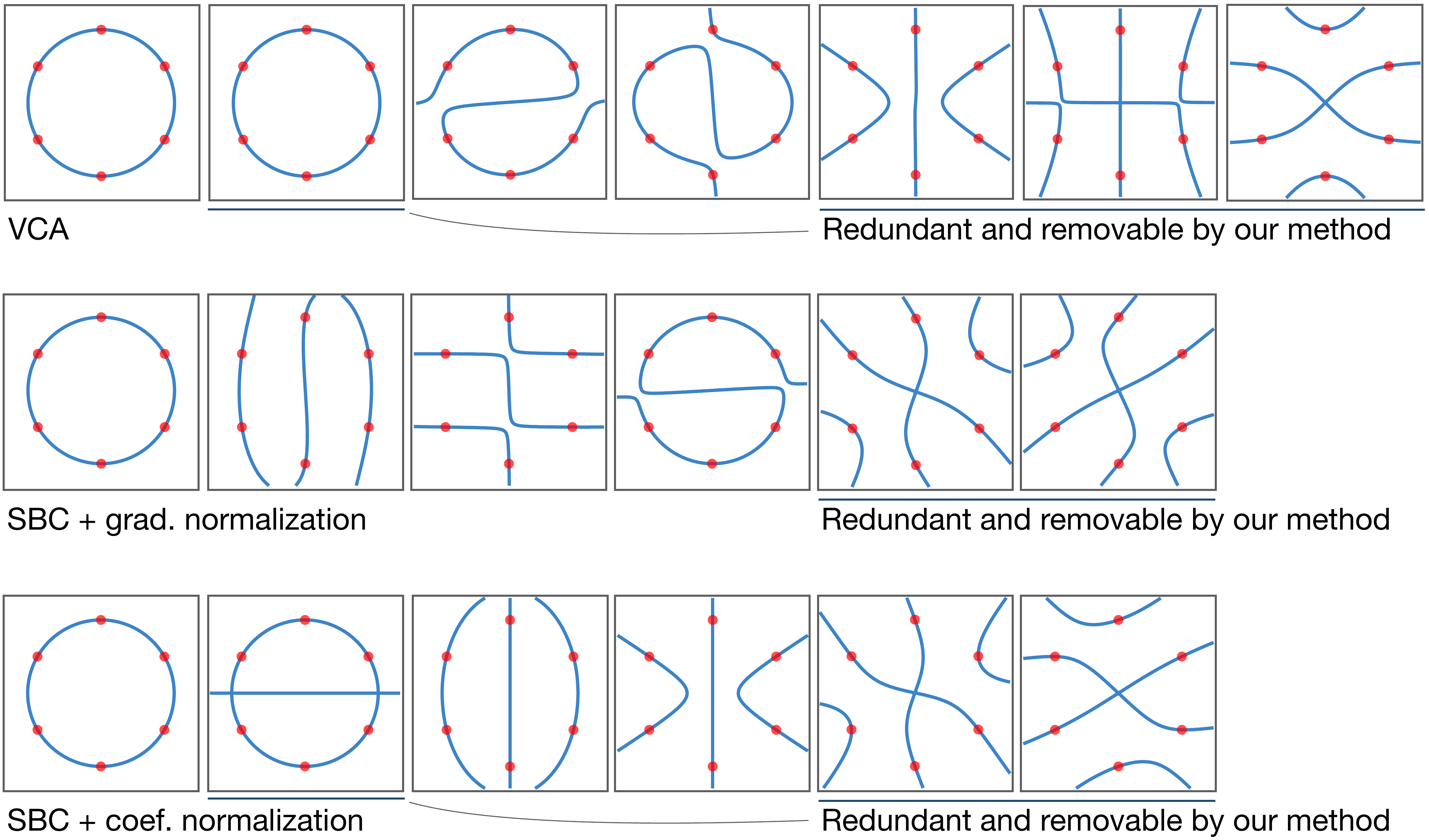}
\caption{Sets of vanishing polynomials obtained by VCA (top row), by SBC-$\mathfrak{n}_{\mathrm{g}}$ (middle row), and by SBC-$\mathfrak{n}_{\mathrm{c}}$ (bottom row) for six-point dataset in a noise-free case. All sets contain redundant basis polynomials, which can be efficiently removed by the basis reduction.}\label{fig:redundancy-6points}
\end{figure*}

\begin{figure*}
\includegraphics[scale=0.28]{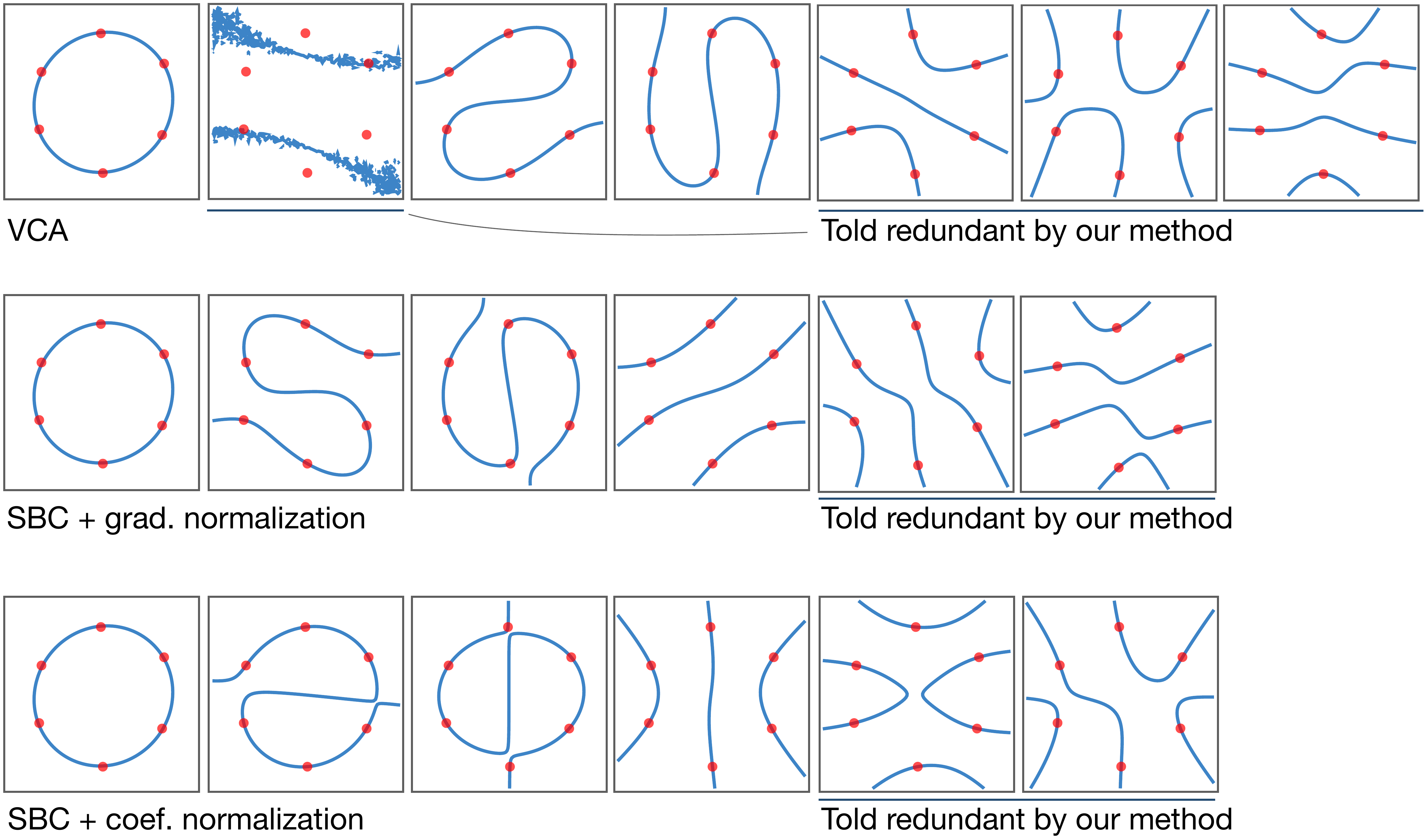}
\caption{Sets of vanishing polynomials obtained by VCA (top row), by SBC-$\mathfrak{n}_{\mathrm{g}}$ (middle row), and by SBC-$\mathfrak{n}_{\mathrm{c}}$ (bottom row) for six-point dataset in a noisy case. All sets contain polynomials that are suggested as redundant by the basis reduction.}\label{fig:redundancy-6points-noisy}
\end{figure*}

\bibliography{main}
\bibliographystyle{plain}